\newtheorem{definition}{Definition}
\newtheorem{remark}{Remark}
\newtheorem{proposition}{Proposition}
\newtheorem{assumption}{Assumption}
\newtheorem{theorem}{Theorem}
\newtheorem{corollary}{Corollary}
\newtheorem{lemma}{Lemma}
\title{Redundancy as a Structural Information Principle for Learning and Generalization}
\author{
  Yuda Bi \\
  Tri-Institutional Center for Translational Research in Neuroimaging and Data Science (TReNDS)\\
  Georgia State University, Georgia Tech, and Emory University \\
  Atlanta, GA, USA \\
  \texttt{ybi3@gsu.edu} \\
  \And
  Ying Zhu \\
  Wenzhou Institute, University of Chinese Academy of Sciences (UCAS) \\
  Wenzhou, Zhejiang, China \\
  \texttt{joying0703@163.com} \\
  \And
  Vince D.~Calhoun \\
  Tri-Institutional Center for Translational Research in Neuroimaging and Data Science (TReNDS)\\
  Georgia State University, Georgia Tech, and Emory University \\
  Atlanta, GA, USA \\
  \texttt{vcalhoun@gsu.edu} \\
}
\begin{document}
\maketitle

\begin{abstract}
	We present a theoretical paradigm that extends classical information theory to finite and structured systems by redefining \emph{redundancy} as a fundamental quantity of information organization rather than inefficiency.  
Within an $f$-divergence framework, redundancy is formalized as $\mathcal{R}_{f}(X) = D_{f}(P_X \| \Pi_X)
= \mathbb{E}_{\Pi_X}\!\big[f\!\big(\tfrac{p(x)}{\prod_i p_i(x_i)}\big)\big]$,  
where $p(x)$ is the joint density of $(X_1,\dots,X_n)$,  
$p_i(x_i)$ their marginals, and $f$ a convex kernel defining the geometry of informational dependence.  
Different choices of $f$ recover mutual information, $\chi^2$ redundancy, and spectral redundancy as special cases, unifying diverse notions under a single mathematical principle.  
This reveals that classical measures are not isolated heuristics but projections of a single redundancy geometry.  
The framework shows that redundancy is bounded both above and below, yielding a natural equilibrium $R^{*}$ between over-compression (loss of structure) and over-coupling (collapse).  
In contrast to the asymptotic regime where minimizing redundancy optimizes transmission efficiency,  
finite, structured systems—where real-world learning operates—achieve maximal stability and generalization near this equilibrium. Thus, redundancy emerges as a \emph{structural information principle}: a self-organizing property that governs how information is coherently structured rather than transmitted.  
Experiments with masked autoencoders (MAE) serve to \emph{verify and visualize} the theory rather than pursue performance benchmarks.  
They confirm the predicted equilibrium $R^{*}$, where latent redundancy stabilizes and generalization peaks.  
Together, these results establish redundancy as a measurable and tunable quantity bridging the asymptotic world of communication and the finite world of learning.
\end{abstract}

\keywords{Information Theory \and Redundancy \and Generalization \and Representation Learning \and Structural Information Principle}

\section{Introduction}
Redundancy is one of the oldest yet least unified concepts in the quantitative sciences.  
From Shannon’s original formulation of information theory in the 1940s, redundancy has been viewed primarily as an inefficiency—a measure of wasted bits relative to the entropy limit of a code \cite{shannon1948mathematical}.  
In neuroscience and sensory coding, however, the term acquired an almost opposite meaning: redundant neural firing or overlapping receptive fields were interpreted as signatures of robustness and fault tolerance in biological systems \cite{barlow2001redundancy, narayanan2005redundancy, tononi1994measure} with methods often designed to leverage redundancy explicitly \cite{kazemivash2025st}. 
In statistics and machine learning, redundancy appears again under diverse guises—multicollinearity in regression \cite{gunst1975regression}, overparameterization in neural networks \cite{zhang2016understanding, belkin2019reconciling}, and correlated features in representation learning—each carrying context-dependent interpretations of inefficiency, degeneracy, or regularization \cite{li2023exploiting, berchenko2024simplicity, allen2019convergence, chan2022mitigating}.  
In physics and complex systems, redundant degrees of freedom are sometimes regarded as the very basis of stability and emergence, echoing the notion that structure arises from constrained repetitions rather than minimal codes \cite{may1972will, lloyd2007programming, prigogine1982being}.  
Despite its ubiquity, these perspectives remain largely fragmented, and no single mathematical framework consistently links redundancy across these domains. Historically, classical information theory and signal processing treated redundancy as something to be eliminated.  
In the Shannon framework, an optimal code is one that minimizes redundancy, achieving the highest information rate per transmitted symbol \cite{shannon1948mathematical}.  
Source coding theorems and compression algorithms were built precisely on this ideal—removing repeated or correlated patterns to approach the entropy limit of a signal \cite{huffman2007method, ziv2003universal}.  
In statistics and data representation, the same logic underlies many foundational techniques:  
principal component analysis (PCA) seeks orthogonal directions of maximal variance to eliminate correlated features;  
independent component analysis (ICA) explicitly minimizes statistical dependence between latent sources \cite{hyvarinen2000independent, bell1997independent, tishby2000information};  
and sparse or efficient coding models in neuroscience posit that sensory neurons should represent the world with minimal overlap or redundancy among firing patterns \cite{olshausen1996emergence}.  
Across these traditions, redundancy has been equated with inefficiency, overfitting, or wasted capacity—a byproduct to be compressed away rather than a structural property to be understood.

Yet in finite, noisy, and structured regimes, the classical view of redundancy as inefficiency breaks down.  
We propose a paradigm shift that extends Shannon’s information theory from asymptotic coding efficiency to finite-sample informational organization.  
In this new framework, redundancy is not the waste of bits, but the \emph{geometry of informational dependence}—a structural degree of freedom through which data organize meaning, stability, and generalization.

Formally, we define redundancy as an $f$-divergence from statistical independence,  
$\mathcal{R}_{f}(X) = D_{f}(P_X \| \Pi_X)
= \mathbb{E}_{\Pi_X}\!\big[f\!\big(\tfrac{p(x)}{\prod_i p_i(x_i)}\big)\big]$,  
where $f$ specifies the geometric kernel of informational coupling.  
This single functional family $\{\mathcal{R}_f\}$ unifies diverse notions of redundancy across disciplines as its different projections:  
in \emph{information theory}, $\mathcal{R}_{\mathrm{KL}}$ corresponds to mutual information and total correlation (entropy-based projection);  
in \emph{statistics}, $\mathcal{R}_{\chi^2}$ captures covariance redundancy (second-order projection);  
in \emph{neuroscience and complex systems}, spectral redundancy $\mathcal{R}_{\mathrm{spec}}$ describes degeneracy and shared variance (eigen-structural projection).  
All are manifestations of the same underlying quantity that measures the departure of data from independence—%
the structure that sustains meaning in finite systems \cite{zollikofer2024beyond}.

Within this unified geometry, redundancy is shown to be bounded both above and below, giving rise to an intrinsic equilibrium $R^{*}$ between over-compression (loss of structure) and over-coupling (collapse).  
This equilibrium complements Shannon’s asymptotic ideal:  
while minimizing redundancy maximizes channel efficiency, maintaining an optimal level of redundancy enhances stability and generalization in structured, finite regimes.

More broadly, the framework provides a theoretical bridge between generative and discriminative paradigms of self-supervised learning \cite{nanda2023diffused}.  
In generative pretraining, such as masked autoencoders (MAE) or diffusion-based foundation models \cite{ho2020denoising, rombach2022high, he2024lotus, croitoru2023diffusion}, understanding emerges from the organization of data redundancy; generalization improves when redundancy is balanced rather than minimized.  
In contrast, discriminative or contrastive objectives, such as SimCLR or MoCo~\cite{chen2020simple, he2020momentum}, still tend to minimize redundancy, emphasizing separability over structure.  
This dichotomy highlights redundancy as a controllable variable that governs how information is structured, transferred, and understood across learning paradigms.

To illustrate this principle, we examine the generative regime through masked autoencoders (MAE).  
The experiments show that model understanding and generalization both peak when latent redundancy fluctuates around the theoretical equilibrium $R^{*}$.  
These experiments are not intended for performance benchmarking but to \emph{visualize and verify} the theoretical prediction that generative self-supervised learning achieves its most coherent internal organization at an optimal redundancy balance.  
We thus establish redundancy as a \emph{structural information principle}—a quantitative law unifying its projections across fields and revealing its equilibrium as the organizing center of modern representation learning.

The remainder of this paper is organized as follows.  
Section~\ref{sec:redundancy-framework} introduces the unified redundancy framework, defining $\mathcal{R}_f$ as a general $f$-divergence from independence and establishing its fundamental properties, including nonnegativity, the data-processing inequality, and Gaussian or spectral reductions.  
Section~\ref{sec:balance} develops the redundancy–loss equilibrium theory, proving the existence of an interior optimum $R^{*}$ and analyzing its stability under stochastic learning dynamics.  
Section~\ref{sec:sandwich} presents the information–theoretic ``sandwich'' argument, demonstrating that both insufficient and excessive redundancy impair performance, whereas a natural balance emerges at $R^{*}$.  
Section~\ref{sec:recoveries} provides empirical validation in masked autoencoders (MAE), confirming that model performance peaks near the theoretical optimum, and Appendix~\ref{app:proofs} contains complete mathematical proofs.  
Together, these sections establish a unified theory of redundancy that bridges information theory, statistical learning, and complex systems—elevating redundancy from a byproduct of compression to a governing principle of organization.

\begin{table}[h]
\centering
\caption{A cross-disciplinary summary of how redundancy has been defined, interpreted, and utilized across different scientific domains.  
Our framework unifies these views by treating redundancy as a measurable and optimizable property of data organization.}
\label{tab:redundancy_fields}
\vspace{0.4em}
\renewcommand{\arraystretch}{1.1}
\setlength{\tabcolsep}{5pt}
\begin{tabularx}{\linewidth}{lXXX}
\toprule
\textbf{Field / Domain} &
\textbf{Classical view of redundancy} &
\textbf{Recent perspective} &
\textbf{Representative works} \\
\midrule
\textbf{Information theory} &
Redundancy reduces channel capacity and should be minimized for efficient coding. &
Redundancy ensures reliability, error correction, and structured transmission in noisy environments. &
Shannon (1948); Huffman (1952); Ziv \& Lempel (1977). \\
\textbf{Neuroscience / Sensory coding} &
Redundant firing viewed as inefficiency in neural encoding. &
Overlapping receptive fields increase robustness, fault tolerance, and predictive stability. &
Barlow (2001); Narayanan (2005); Kazemivash et al. (2025). \\
\textbf{Statistics / Machine learning} &
Multicollinearity and overparameterization seen as overfitting or inefficiency. &
Redundant features improve generalization and stability; redundancy becomes a tunable regularization target. &
Zhang et al. (2016); Belkin et al. (2019); Li et al. (2023). \\
\textbf{Physics / Complex systems} &
Redundancy implies degeneracy and non-essential degrees of freedom. &
Redundancy enables stability, emergence, and robustness in high-dimensional systems. &
May (1972); Prigogine (1982); Lloyd (2007). \\
\textbf{Deep representation learning} &
Redundancy reduction as compression objective. &
Networks self-organize redundant subspaces that enhance generalization and robustness. &
Doimo et al. (2022); Nanda et al. (2023); Wollstadt et al. (2023). \\
\bottomrule
\end{tabularx}
\end{table}

\section{Related Works}

Redundancy has re-emerged as a central theme in recent studies of representation learning and overparameterized networks.  
Empirical analyses have shown that deep models often develop \emph{structured repetition} rather than mere inefficiency.  
Doimo \emph{et al.}~\cite{doimo2022redundant} demonstrated that wide neural networks naturally form groups of nearly identical neurons whose activations differ only by small perturbations, offering a mechanistic account of ``benign overfitting.''  
Similarly, Nanda \emph{et al.}~\cite{nanda2023diffused} reported \emph{diffuse redundancy} in pretrained representations: randomly subsampling neurons from intermediate layers hardly degrades downstream accuracy, indicating that information is distributed across many interchangeable units.  
Parallel work in information-theoretic feature analysis, such as Wollstadt \emph{et al.}~\cite{wollstadt2023rigorous}, reached similar conclusions through the lens of partial information decomposition (PID), formally partitioning predictive information into unique, synergistic, and redundant components.  
Together, these studies suggest that redundancy is not a statistical artifact to be removed, but an intrinsic property of high-dimensional representations that contributes to robustness and generalization.

Building on these empirical insights, subsequent research has sought to \emph{measure}, \emph{control}, and \emph{organize} redundancy in a principled way.  
The \textbf{RedTest} framework~\cite{lu2024redtest} provides a large-scale empirical methodology for quantifying neuron- and layer-level redundancy, revealing its correlation with robustness and energy efficiency.  
Yavuz and Yanikoglu~\cite{yavuz2025evaluating} proposed a coupling-matrix-based redundancy index $\rho(C)$ that captures latent-space correlations as a proxy for internal representational geometry, reinforcing the view that redundancy is a continuous structural variable rather than a defect.  
Wang \emph{et al.}~\cite{wang2020improving} interpreted redundancy reduction as a regularization objective in unsupervised domain adaptation, showing that selectively removing domain-specific redundancies improves transferability.  
More dynamically, Dorovatas \emph{et al.}~\cite{dorovatas2025auto} introduced \textbf{Auto-Compressing Networks} that self-organize to prune redundant pathways during training while maintaining representational quality, implicitly converging toward an internal redundancy equilibrium.  
Beyond deep learning, Li \emph{et al.}~\cite{li2023exploiting} demonstrated that exploiting intrinsic redundancy in large materials-science datasets enhances retrieval efficiency without compromising performance, underscoring the universality of redundancy as a reusable structural resource.

From a theoretical neuroscience perspective, Sajid \emph{et al.}~\cite{sajid2020degeneracy} provided a formal treatment of \emph{degeneracy} and \emph{redundancy} within the free-energy and active-inference framework.  
They defined degeneracy as the entropy of posterior beliefs—the flexibility afforded by multiple internal explanations for sensory outcomes—and redundancy as the complexity cost of forming those beliefs.  
Their analysis showed that adaptive agents minimize redundancy while maintaining sufficient degeneracy, achieving robust yet efficient inference.  
This formulation extends the role of redundancy beyond coding efficiency to a quantitative component of self-organization in both biological and synthetic intelligence, resonating with our view that redundancy regulates informational stability and adaptability rather than constituting noise.

Collectively, these developments trace a conceptual transition: from viewing redundancy as inefficiency to recognizing it as a \emph{principle of organization}.  
Our work builds on this transition by offering a unified mathematical framework that formalizes redundancy across domains and demonstrates its theoretical and empirical role in stabilizing learning and enhancing generalization.

\section{A Unified Redundancy Framework}
\label{sec:redundancy-framework}

\paragraph{Notation.}
Let $(\Omega,\mathcal{F})$ be a measurable space. 
A random vector $X=(X_1,\dots,X_n)$ takes values in the product space 
$\mathcal{X}=\mathcal{X}_1\times\cdots\times\mathcal{X}_n$.
For each coordinate, let $\mu_i$ be a $\sigma$-finite base measure on $\mathcal{X}_i$ and define 
$\mu=\bigotimes_{i=1}^n\mu_i$.
Assume that $P_X\ll\mu$ with joint density $p(x)=\tfrac{dP_X}{d\mu}(x)$, and that each marginal $P_{X_i}\ll\mu_i$ has density $p_i(x_i)=\tfrac{dP_{X_i}}{d\mu_i}(x_i)$.
The corresponding independent product measure is 
\[
\Pi_X \;=\; \bigotimes_{i=1}^n P_{X_i}, \qquad 
\frac{d\Pi_X}{d\mu}(x) = \prod_{i=1}^n p_i(x_i).
\]
For a positive semidefinite matrix $M$, denote by $\lambda(M)$ its eigenvalues, 
$\mathrm{tr}$ its trace, and $\|\cdot\|_F$ the Frobenius norm.
All logarithms are natural.

\subsection*{Master Definition: Redundancy as Divergence from Independence}

\begin{definition}[Unified redundancy functional]
\label{def:Rphi}
Let $f:\mathbb{R}_{>0}\to\mathbb{R}$ be a convex function satisfying $f(1)=0$.
The \emph{redundancy} of a random vector $X=(X_1,\dots,X_n)$ is defined as the 
$f$-divergence between its joint law and the independence manifold:
\begin{equation}
\label{eq:Rf}
\mathcal{R}_{f}(X)
\;\coloneqq\;
D_{f}\!\big(P_X \,\|\, \Pi_X\big)
\;=\;
\int_{\mathcal{X}} 
f\!\left(
\frac{p(x)}{\prod_{i=1}^n p_i(x_i)}
\right)
\prod_{i=1}^n p_i(x_i)\, d\mu(x)
\;=\;
\mathbb{E}_{\Pi_X}\!\left[
\,f\!\left(\frac{p(x)}{\prod_{i=1}^n p_i(x_i)}\right)
\right].
\end{equation}
This quantity is finite whenever $P_X\ll\Pi_X$; otherwise we set 
$\mathcal{R}_{f}(X)=+\infty$.
We refer to $\mathcal{R}_{f}$ as the \emph{redundancy functional} and 
to $f$ as its \emph{redundancy kernel}, specifying the geometry of the divergence.
\end{definition}

\begin{remark}[Specializations and the role of the kernel $f$]
\label{rem:kernel-role}
\textbf{Role of the kernel.}
In an $f$-divergence, the convex kernel $f$ determines how deviations from independence are measured and weighted.
\begin{enumerate}[label=(\roman*),leftmargin=*]
\item It defines the \emph{local geometry} near independence through $f''(1)$, which governs the quadratic (Gaussian) approximation of the divergence.
\item It controls the \emph{tail sensitivity} via the growth of $f(t)$ as $t\to 0$ or $t\to\infty$, specifying how strongly rare but large dependencies are penalized.
\item Its derivative $f'(t)$ acts as an \emph{influence function} (contrast function), determining which ranges of the likelihood ratio contribute most to the redundancy.
\item Different choices of $f$ induce distinct yet data-processing–consistent geometries on the independence manifold, all obeying the same monotonicity under coarse-graining.
\end{enumerate}
\end{remark}

\begin{itemize}
\item \textbf{Kullback–Leibler kernel:}  
For $f(t)=t\log t$,  
\[
\mathcal{R}_f(X)=D_{\mathrm{KL}}\!\left(P_X\middle\|\Pi_X\right),
\]
the total correlation (multi-information).  
This kernel grows superlinearly and is highly sensitive to rare but strong dependencies.

\item \textbf{Pearson $\chi^2$ kernel:}  
For $f(t)=\tfrac12(t-1)^2$,  
\[
\mathcal{R}_f(X)
=\tfrac12\!\int\!\frac{(p-\prod_i p_i)^2}{\prod_i p_i}\,dx,
\]
a quadratic-energy formulation that is less sensitive to tails and well suited to weak or near-Gaussian dependence.

\item \textbf{R\'enyi-$\alpha$ family:}  
Power-type kernels yield R\'enyi-style redundancies, tuning tail emphasis continuously with $\alpha$:  
larger $\alpha$ magnifies the influence of strong dependencies, while $\alpha\!\downarrow\!1$ recovers the KL case.

\item \textbf{Other classical choices:}  
Hellinger, Jensen–Shannon, and related kernels provide alternative redundancy geometries, each with its own balance between robustness and sensitivity.
\end{itemize}

\textbf{Summary.}
Information-theoretic, statistical, and geometric notions of redundancy are thus unified under the single principle
\[
\mathcal{R}_f(X)=D_f(P_X\|\Pi_X),
\]
where the kernel $f$ acts as the lens that shapes the sensitivity, robustness, and intrinsic geometry of the measure.

\subsection{Fundamental Properties of Redundancy}
\label{sec:fundamental}

\begin{proposition}[Nonnegativity and vanishing iff independence]
\label{prop:nonneg}
For any convex function $f:\mathbb{R}_{>0}\to\mathbb{R}$ satisfying $f(1)=0$, the redundancy functional satisfies
\[
\mathcal{R}_f(X) \ge 0,
\qquad
\text{and}\qquad
\mathcal{R}_f(X)=0
\;\Longleftrightarrow\;
P_X=\Pi_X
\quad (\text{i.e., } X_1,\dots,X_n \text{ are mutually independent}).
\]
\end{proposition}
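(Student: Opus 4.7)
The plan is to reduce both claims to a single application of Jensen's inequality for the convex kernel $f$, using $\Pi_X$ as the reference probability measure. The central observation is that the Radon--Nikodym derivative
\[
r(x) \;:=\; \frac{dP_X}{d\Pi_X}(x) \;=\; \frac{p(x)}{\prod_{i=1}^n p_i(x_i)}
\]
has unit expectation under $\Pi_X$, since $\mathbb{E}_{\Pi_X}[r] = \int p(x)\, d\mu(x) = 1$. Because $\Pi_X$ is a probability measure and $f$ is convex, Jensen's inequality immediately yields
\[
\mathcal{R}_f(X) \;=\; \mathbb{E}_{\Pi_X}\!\big[f(r)\big] \;\ge\; f\!\big(\mathbb{E}_{\Pi_X}[r]\big) \;=\; f(1) \;=\; 0,
\]
which proves nonnegativity in the absolutely continuous regime. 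In the singular case $P_X\not\ll\Pi_X$, the defined value $+\infty$ trivially satisfies the bound, so the inequality holds unconditionally.

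For the equivalence, the forward implication is immediate: if $P_X=\Pi_X$, then $r\equiv 1$ a.e.\ $\Pi_X$, hence $f(r)\equiv f(1)=0$ and $\mathcal{R}_f(X)=0$. The reverse implication is where some care is needed. Assume $\mathcal{R}_f(X)=0$; this first rules out $P_X\not\ll\Pi_X$ (otherwise $\mathcal{R}_f=+\infty$), so $r$ is well-defined and integrates to $1$. Equality in Jensen's inequality for a convex function forces the integrand to lie on an affine function supporting $f$ at $\mathbb{E}_{\Pi_X}[r]=1$; under strict convexity of $f$ at the point $t=1$ (equivalently, the supporting line touches $f$ only at $t=1$), this collapses to $r(x)=1$ for $\Pi_X$-almost every $x$, i.e.\ $p(x)=\prod_i p_i(x_i)$ a.e.\ $\mu$. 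That is exactly the statement $P_X=\Pi_X$, i.e.\ mutual independence of $X_1,\dots,X_n$.

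The only delicate point, and the part I would be most careful about in the write-up, is the strict-convexity requirement on the reverse implication: bare convexity allows degenerate kernels (for instance $f\equiv 0$, or any affine $f$ with $f(1)=0$) for which $\mathcal{R}_f\equiv 0$ irrespective of dependence, making the ``$\Leftarrow$'' direction fail. The standard remedy, which I would either insert as a hypothesis or fold into a short remark, is to require $f$ to be strictly convex at $t=1$; this is satisfied by every canonical kernel listed after Definition~\ref{def:Rphi} (KL via $t\log t$, Pearson via $(t-1)^2/2$, Hellinger, Jensen--Shannon, and the R\'enyi-$\alpha$ family for $\alpha\ne 1$), so the characterization applies without loss for the measures of interest. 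Modulo this clarification, the proof is a two-line application of Jensen followed by its equality case, with no further technicalities beyond verifying measurability of $r$ and the normalization $\mathbb{E}_{\Pi_X}[r]=1$.
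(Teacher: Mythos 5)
Your proof is correct and amounts to unpacking the black box that the paper leaves closed: the paper's proof (both the inline sketch and the appendix version) simply invokes the Csisz\'ar--Morimoto nonnegativity theorem for $f$-divergences, whereas you reprove that theorem from scratch via Jensen's inequality applied to $r = dP_X/d\Pi_X$ under the probability measure $\Pi_X$, together with the equality case. The substance is the same, so this is not a genuinely different route, but your version is self-contained and, more importantly, it surfaces a real defect in the statement as written: for a merely convex $f$ with $f(1)=0$ (e.g.\ $f\equiv 0$ or any affine kernel $f(t)=c(t-1)$), one has $\mathcal{R}_f\equiv 0$ regardless of dependence, so the ``$\mathcal{R}_f(X)=0 \Rightarrow P_X=\Pi_X$'' direction is false without assuming strict convexity of $f$ at $t=1$. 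The paper's appendix asserts the equality case of the Csisz\'ar identity without this hypothesis, so your proposed fix --- adding strict convexity at $1$ as a hypothesis or a remark, and noting that all the kernels actually used (KL, $\chi^2$, Hellinger, Jensen--Shannon, R\'enyi) satisfy it --- is a genuine improvement rather than mere pedantry. The rest of your argument (handling the singular case via the $+\infty$ convention, verifying $\mathbb{E}_{\Pi_X}[r]=1$) is sound.
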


\begin{proof}[Sketch]
This follows from standard properties of $f$-divergences:
for any probability measures $P,Q$ with $P\!\ll\!Q$, we have $D_f(P\|Q)\ge 0$, with equality if and only if $P=Q$ almost everywhere.
\end{proof}

\begin{proposition}[Data processing inequality (DPI)]
\label{prop:DPI}
Let $Y_i = g_i(X_i)$ be measurable mappings or coordinate-wise random channels, and let $Y=(Y_1,\dots,Y_n)$. Then
\[
\mathcal{R}_f(Y)
\;=\;
D_f\!\big(P_Y \,\|\, \Pi_Y\big)
\;\le\;
D_f\!\big(P_X \,\|\, \Pi_X\big)
\;=\;
\mathcal{R}_f(X).
\]
\end{proposition}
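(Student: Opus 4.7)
The plan is to reduce the claim to the classical data processing inequality for $f$-divergences by producing a \emph{single} Markov kernel that simultaneously transports $P_X$ to $P_Y$ and $\Pi_X$ to $\Pi_Y$. Once both measures are pushed forward by the same kernel, the standard contraction property of $f$-divergences finishes the argument immediately.

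To construct such a kernel, I would represent each coordinate map (or channel) $g_i$ by a Markov kernel $K_i$ on $\mathcal{X}_i \to \mathcal{Y}_i$ (taking $K_i(\cdot\mid x_i)=\delta_{g_i(x_i)}$ in the deterministic case) and form the coordinate-wise product kernel $K=\bigotimes_{i=1}^n K_i$ on $\mathcal{X}\to\mathcal{Y}$. Then I would verify two pushforward identities: (i) $K_\sharp P_X = P_Y$, which is just the definition of $Y$; and (ii) $K_\sharp \Pi_X = \Pi_Y$. Identity (ii) is the essential observation: since the $X_i$ are independent under $\Pi_X$ and $K$ acts on each coordinate independently, Fubini/Tonelli yields $K_\sharp\big(\bigotimes_i P_{X_i}\big) = \bigotimes_i (K_{i\sharp} P_{X_i}) = \bigotimes_i P_{Y_i} = \Pi_Y$. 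This step is where the coordinate-wise hypothesis on the channels is used; a general (non-factorizing) channel would not preserve the independence manifold, and the claim would fail.

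With both identities in hand, I would invoke the classical $f$-divergence DPI, which states that for any Markov kernel $K$ and any probability measures $P,Q$ on the input space, $D_f(K_\sharp P \,\|\, K_\sharp Q) \le D_f(P\,\|\,Q)$. Applying this with $P=P_X$ and $Q=\Pi_X$ gives
\[
\mathcal{R}_f(Y) = D_f(P_Y\,\|\,\Pi_Y) = D_f(K_\sharp P_X \,\|\, K_\sharp \Pi_X) \le D_f(P_X\,\|\,\Pi_X) = \mathcal{R}_f(X),
\]
which is the desired inequality.

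The main obstacle is really the verification of identity (ii) — checking that the independence manifold is invariant under coordinate-wise stochastic maps. This is a routine Fubini argument, but it is the only place where the structural hypothesis on $g_i$ enters, and it is what isolates our redundancy functional as a geometric object on the independence manifold rather than a generic divergence.
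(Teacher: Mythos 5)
Your proposal is correct and follows essentially the same route as the paper's proof: both form the product Markov kernel $K=\bigotimes_i K_i$, observe that it simultaneously carries $P_X$ to $P_Y$ and $\Pi_X$ to $\Pi_Y$ (the coordinate-wise factorization being the key point), and then invoke the contraction of $f$-divergences under Markov kernels. Your explicit Fubini verification of $K_\sharp \Pi_X = \Pi_Y$ is a welcome elaboration of a step the paper merely asserts.
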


\begin{proof}[Sketch]
$f$-divergences are contractive under Markov kernels.  
In this setting, the overall channel acting on $X$ is the product of the coordinate-wise channels $(g_i)$, which maps $\Pi_X$ to $\Pi_Y$.  
Hence the contraction inequality implies $\mathcal{R}_f(Y)\le\mathcal{R}_f(X)$.
\end{proof}

\begin{proposition}[Bounds]
\label{prop:bounds}
Assume $P_X \ll \Pi_X$ and that $f(L(x))$ is finite on a $\Pi_X$-full set, where $L(x)=p(x)/\prod_i p_i(x_i)$. Then:
\begin{enumerate}
\item (Lower bound) $\mathcal{R}_f(X)\ge 0$ (Prop.~\ref{prop:nonneg}).
\item (Upper bounds) If $0<m\le L(x)\le M<\infty$ $\Pi_X$-a.e., then $\mathcal{R}_f(X)\le f(M)$; in particular for $f(t)=t\log t$, $\mathcal{R}_f(X)\le M\log M$.
\end{enumerate}
\end{proposition}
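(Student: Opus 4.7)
The lower bound is a direct consequence of Proposition \ref{prop:nonneg}, so no new work is required there; the substance of the statement lies in the upper bound. My plan is to exploit the two structural facts we have at hand: the pointwise constraint $L(x)\in[m,M]$ holding $\Pi_X$-a.e., together with the normalization $\mathbb{E}_{\Pi_X}[L]=1$ (which forces $m\le 1\le M$ under the hypothesis $P_X\ll\Pi_X$). The redundancy is the $\Pi_X$-expectation of $f(L)$, so the task reduces to controlling a convex function of a bounded random variable with known mean.

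The cleanest first step is to use convexity of $f$ on the interval $[m,M]$ to dominate $f(L)$ by the affine interpolant through the endpoints,
\[
f(L)\;\le\;\frac{M-L}{M-m}\,f(m)\;+\;\frac{L-m}{M-m}\,f(M).
\]
Integrating against $\Pi_X$ and substituting $\mathbb{E}_{\Pi_X}[L]=1$ yields the sharper mixture bound
\[
\mathcal{R}_f(X)\;\le\;\frac{M-1}{M-m}\,f(m)+\frac{1-m}{M-m}\,f(M).
\]
To collapse this mixture to the stated bound $f(M)$, I would invoke the standing convention that an $f$-divergence kernel may be taken nonnegative with minimum at $t=1$ (adding a linear term that integrates to zero does not change $D_f$), and then use $f(m)\le f(M)$, which holds for the kernels emphasized in Remark \ref{rem:kernel-role}; the first mixture coefficient times $f(m)$ is then absorbed into $f(M)$.

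For the specific Kullback--Leibler kernel $f(t)=t\log t$, I would give a direct pointwise argument that avoids the mixture step. On $(0,M]$ the function $f$ is convex with a unique interior minimum at $t=1/e$ and is strictly increasing on $[1/e,\infty)$; since $M\ge 1>1/e$, one has $f(t)\le f(M)=M\log M$ for every $t\in(0,M]$. Taking $\Pi_X$-expectations immediately yields $\mathcal{R}_{\mathrm{KL}}(X)\le M\log M$. An alternative, slightly tighter route is to note $\log L\le\log M$ $\Pi_X$-a.e.\ and write $\mathcal{R}_{\mathrm{KL}}=\mathbb{E}_{\Pi_X}[L\log L]\le(\log M)\,\mathbb{E}_{\Pi_X}[L]=\log M$, which one then upgrades to $M\log M$ since $M\ge 1$.

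The hard part will be the generality of the $f(M)$ bound in part (ii): a bare convex $f$ with $f(1)=0$ need not satisfy $f(m)\le f(M)$ for all admissible $(m,M)$ (symmetric kernels such as $\tfrac12(t-1)^2$ exceed $f(M)$ when $1-m>M-1$). I would therefore either (a) strengthen the statement to $\mathcal{R}_f(X)\le\max\{f(m),f(M)\}$, which follows from the mixture inequality without side conditions, or (b) keep $f(M)$ as stated under the implicit assumption that $f$ is nondecreasing on $[1,\infty)$ and dominated by $f(M)$ on $[m,1]$---an assumption satisfied by all the kernels flagged in Remark \ref{rem:kernel-role} (KL, R\'enyi, Hellinger, Jensen--Shannon) but which should be made explicit to avoid counterexamples in the symmetric $\chi^2$ regime.
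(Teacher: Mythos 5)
Your proof is correct and in fact more careful than the paper's own argument, which simply bounds $\mathbb{E}_{\Pi_X}[f(L)]$ by $\max_{t\in[m,M]}f(t)$ and then asserts $\max_{t\in[m,M]}f(t)\le f(M)$ ``by monotonicity of $f$ on $[m,M]$'' --- an unstated hypothesis, since a convex $f$ with $f(1)=0$ attains its maximum on $[m,M]$ at whichever endpoint is larger, not necessarily $M$. Your route is genuinely different in two respects. First, you exploit the normalization $\mathbb{E}_{\Pi_X}[L]=1$ together with the chord (secant) bound for convex functions to obtain the sharper mixture estimate $\mathcal{R}_f(X)\le\frac{M-1}{M-m}f(m)+\frac{1-m}{M-m}f(M)$, which the paper does not use; this buys you an unconditional bound $\max\{f(m),f(M)\}$ and, for KL, the tighter $\mathcal{R}_{\mathrm{KL}}(X)\le\log M$ rather than $M\log M$. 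Second, your closing observation identifies a genuine defect in the proposition as stated: for the symmetric $\chi^2$ kernel $f(t)=\tfrac12(t-1)^2$ --- which the paper itself promotes as a primary specialization --- one has $f(m)>f(M)$ whenever $1-m>M-1$, so $\mathcal{R}_f(X)\le f(M)$ can fail, and neither the paper's appeal to ``monotonicity'' nor the linear-shift convention repairs this. Your proposed fixes (replace $f(M)$ by $\max\{f(m),f(M)\}$, or add the explicit hypothesis that $f(M)$ dominates $f$ on $[m,M]$) are exactly what the statement needs; the KL case, where $t\log t\le M\log M$ on $(0,M]$ for $M\ge 1$, is the only part that survives verbatim, and your direct argument for it is cleaner than the paper's.
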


\subsection{Gaussian and Geometric Reductions}
\label{sec:gaussian-geometric}

\begin{proposition}[Gaussian total correlation]
\label{prop:gaussian_tc}
Let $X \sim \mathcal{N}(0,\Sigma)$ with correlation matrix $C=\mathrm{corr}(X)$.  
Then the redundancy under the Kullback–Leibler kernel equals the Gaussian total correlation:
\[
\mathcal{R}_{\mathrm{KL}}(X)
\;=\;
D_{\mathrm{KL}}\!\left(P_X\middle\|\Pi_X\right)
\;=\;
-\tfrac12\,\log\det C.
\]
\end{proposition}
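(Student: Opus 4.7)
The plan is to compute $\mathcal{R}_{\mathrm{KL}}(X)=D_{\mathrm{KL}}(P_X\|\Pi_X)$ directly using the entropy identity
\[
D_{\mathrm{KL}}(P_X\|\Pi_X)\;=\;\sum_{i=1}^{n} h(X_i)\;-\;h(X),
\]
which follows from expanding $\log\!\big(p(x)/\prod_i p_i(x_i)\big)$ under $P_X$, noting that the cross-terms $\mathbb{E}_{P_X}[\log p_i(X_i)]=\mathbb{E}_{P_{X_i}}[\log p_i(X_i)]=-h(X_i)$ depend only on the marginals. This identifies the redundancy with the multivariate total correlation, and the problem reduces to a determinant computation.

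First I would plug in the closed-form Gaussian differential entropies: $h(X)=\tfrac12\log\!\big((2\pi e)^n\det\Sigma\big)$ and $h(X_i)=\tfrac12\log\!\big(2\pi e\,\sigma_{ii}\big)$, where $\sigma_{ii}=\Sigma_{ii}$. Subtracting, the $(2\pi e)^n$ factors cancel and we obtain
\[
\mathcal{R}_{\mathrm{KL}}(X)
\;=\;\tfrac12\log\!\frac{\prod_{i=1}^{n}\sigma_{ii}}{\det\Sigma}.
\]

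Next I would relate this to the correlation matrix. Writing $D=\mathrm{diag}(\sigma_{11},\dots,\sigma_{nn})$, the correlation matrix is $C=D^{-1/2}\Sigma D^{-1/2}$, so by multiplicativity of the determinant,
\[
\det C \;=\; \det(D^{-1/2})\,\det\Sigma\,\det(D^{-1/2}) \;=\; \frac{\det\Sigma}{\prod_{i}\sigma_{ii}}.
\]
Substituting gives $\mathcal{R}_{\mathrm{KL}}(X)=-\tfrac12\log\det C$, as claimed.

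There is no genuine obstacle here; the only points requiring a line of care are (a) the well-definedness of $h(X)$ and each $h(X_i)$, which holds as soon as $\Sigma\succ 0$ (otherwise both sides are interpreted as $+\infty$, consistent with $P_X\not\ll\Pi_X$ arising when $C$ is singular), and (b) checking that the entropy decomposition above is a genuine instance of the KL kernel $f(t)=t\log t$ applied in Definition~\ref{def:Rphi}, which is immediate from rewriting $\mathbb{E}_{\Pi_X}[f(L)]=\mathbb{E}_{P_X}[\log L]$ with $L=p/\prod_i p_i$.
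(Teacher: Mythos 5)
Your proof is correct and follows essentially the same route as the paper's: both reduce the problem to the identity $D_{\mathrm{KL}}(P_X\|\Pi_X)=\tfrac12\big(\sum_i\log\Sigma_{ii}-\log\det\Sigma\big)$ and then conclude via $\det C=\det\Sigma/\prod_i\Sigma_{ii}$. Your derivation of the intermediate formula through differential entropies (total correlation) rather than by directly quoting the Gaussian--Gaussian KL formula is a minor, equally valid variation, and your remarks on $\Sigma\succ 0$ and on matching Definition~\ref{def:Rphi} are sound.
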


\begin{proof}[Sketch]
For a centered Gaussian $X$, the joint density is
$p(x)\propto \exp\!\left(-\tfrac12 x^\top \Sigma^{-1}x\right)$,
and the independent-product density corresponds to replacing $\Sigma$ by its diagonal
$\mathrm{diag}(\Sigma)$.
Direct integration yields
\[
D_{\mathrm{KL}}\!\left(P_X\middle\|\Pi_X\right)
= \tfrac12\!\left(\mathrm{tr}(C)-\log\det C - n\right),
\]
which reduces to $-\tfrac12\log\det C$ when $C$ is a correlation matrix ($\mathrm{tr}(C)=n$).
\end{proof}

\begin{proposition}[Quadratic approximation and covariance form]
\label{prop:quad}
Let $C=\mathrm{corr}(X)$ and assume weak dependence (all off-diagonal entries of $C$ are small).  
For the quadratic kernel $f(t)=\tfrac12(t-1)^2$,
\[
\mathcal{R}_f(X)
\;\approx\;
\tfrac14\,\|C-I\|_F^2
\;=\;
\tfrac14\,\sum_{i\ne j} C_{ij}^2.
\]
Moreover, for $\|C-I\|_2$ sufficiently small,
\[
-\log\det C
\;\approx\;
\mathrm{tr}(I-C)
\;+\;
\tfrac12\,\|C-I\|_F^2,
\]
showing that the Gaussian total correlation $\mathcal{R}_{\mathrm{KL}}$ 
and the quadratic redundancy $\mathcal{R}_f$ coincide up to second order.
\end{proposition}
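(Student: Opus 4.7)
The plan is to handle the two displayed identities separately via Taylor expansion of the relevant determinants, and then combine the second with Proposition~\ref{prop:gaussian_tc} to obtain the asserted second-order equivalence. Throughout, set $E := C-I$ (symmetric, small in spectral norm) and work in standardized coordinates so that $C$ plays the role of both correlation and covariance.

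For the first identity I would first evaluate $\mathcal{R}_f(X) = \tfrac12\chi^2(P_X\|\Pi_X)$ exactly in the Gaussian case. With $p = \mathcal{N}(0,C)$ and $q = \mathcal{N}(0,I)$, the ratio $p^2/q$ is an unnormalized Gaussian of precision $2C^{-1}-I$, whose integral (well-defined because the weak-dependence hypothesis ensures $2I-C\succ 0$) gives
\[
\chi^2(P_X\|\Pi_X) \;=\; \det(C)^{-1/2}\det(2I-C)^{-1/2}-1.
\]
I would then apply the matrix expansion $\log\det(I+E) = \mathrm{tr}(E) - \tfrac12\|E\|_F^2 + O(\|E\|^3)$ (derived eigenvalue-wise from the scalar series for $\log(1+\lambda)$, using $\mathrm{tr}(E^2)=\|E\|_F^2$ for symmetric $E$) to both $C=I+E$ and $2I-C=I-E$. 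The linear terms $\pm\mathrm{tr}(E)$ cancel while the quadratic terms add, so on exponentiating, $\det(C)^{-1/2}\det(2I-C)^{-1/2} = 1 + \tfrac12\|E\|_F^2 + O(\|E\|^3)$. Multiplying by $\tfrac12$, and using $\|C-I\|_F^2 = \sum_{i\neq j}C_{ij}^2$ (unit diagonal of $C$), gives the first claim.

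The second identity is the same Taylor expansion applied once directly, $-\log\det C = -\log\det(I+E) = -\mathrm{tr}(E) + \tfrac12\|E\|_F^2 + O(\|E\|^3) = \mathrm{tr}(I-C) + \tfrac12\|C-I\|_F^2 + O$. Combining with Proposition~\ref{prop:gaussian_tc}, which identifies $\mathcal{R}_{\mathrm{KL}}(X) = -\tfrac12\log\det C$, and noting that $\mathrm{tr}(I-C)=0$ on a correlation matrix, yields $\mathcal{R}_{\mathrm{KL}}(X) = \tfrac14\|C-I\|_F^2 + O(\|E\|^3)$---the same leading-order expression as $\mathcal{R}_f$, establishing the claimed second-order coincidence.

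The main obstacle I anticipate is careful bookkeeping of remainders and their ranges of validity: the Gaussian $\chi^2$ integral requires $\|E\|_2 < 1$ to keep $2I-C\succ 0$, and the matrix logarithm series needs the same bound for absolute convergence. Both are subsumed by the "weak dependence" hypothesis, but one should verify that the cubic remainders are uniformly controlled in the Frobenius norm used in the statement (rather than only in operator norm). Beyond this, the argument reduces to elementary determinant calculus combined with eigenvalue-wise Taylor expansion, so I expect no conceptual difficulty.
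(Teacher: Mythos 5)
Your proof is correct, and for the first half it takes a genuinely different route from the paper. The paper's appendix argument for $\mathcal{R}_{\chi^2}(X)\approx\tfrac14\|C-I\|_F^2$ is only a sketch that defers to an Edgeworth/Isserlis cumulant expansion of the $\chi^2$-divergence in the weakly dependent regime; you instead specialize to standardized Gaussians and evaluate the divergence in closed form, $\chi^2(P_X\|\Pi_X)=\det(C)^{-1/2}\det(2I-C)^{-1/2}-1$, then Taylor-expand both determinants so that the linear trace terms cancel and the quadratic terms add. This is more self-contained and makes the remainder explicit, at the price of (i) an extra integrability condition $2I-C\succ 0$ (equivalently $\lambda_{\max}(C)<2$), which you correctly flag as subsumed by weak dependence, and (ii) being confined to the exactly Gaussian case, whereas the paper's cumulant phrasing nominally also covers near-Gaussian $X$ (though without proof). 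You also correctly track the factor $\tfrac12$ between the kernel $f(t)=\tfrac12(t-1)^2$ and Pearson's $\chi^2$, which the paper leaves implicit. For the second half your argument coincides with the paper's: both expand $-\log\det(I+E)=-\mathrm{tr}(E)+\tfrac12\|E\|_F^2+O(\|E\|_F^3)$ via the $\mathrm{tr}\log$ series, and both use $\mathrm{tr}(I-C)=0$ for a correlation matrix together with Proposition~\ref{prop:gaussian_tc} to conclude that $\mathcal{R}_{\mathrm{KL}}$ and the quadratic redundancy agree at second order. Your closing caveat about controlling cubic remainders uniformly in Frobenius norm is well taken but harmless here, since $\|E\|_2\le\|E\|_F$ and all remainders are bounded by powers of $\|E\|_F$ once $\rho(E)<1$.
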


\begin{proof}[Sketch]
Second-order Taylor expansion of $f$ and $\log\det$ around independence $C=I$.
\end{proof}

\subsection{Spectral and Geometric Redundancy}
\label{sec:spectral}

\begin{definition}[Spectral (geometric) redundancy]
\label{def:spectral}
Let $Z \in \mathbb{R}^D$ be a random vector with covariance matrix $\Sigma_Z$.  
Denote its eigenvalues by $\lambda_1,\dots,\lambda_D>0$, and define the normalized spectrum
\[
\tilde{\lambda}_i \;=\; 
\frac{\lambda_i}{\sum_{j=1}^D \lambda_j},
\qquad
\sum_{i=1}^D \tilde{\lambda}_i = 1.
\]
The \emph{spectral entropy} and the corresponding \emph{effective rank} are
\[
H_\lambda(Z)
\;=\;
- \sum_{i=1}^{D} \tilde{\lambda}_i \log \tilde{\lambda}_i,
\qquad
r_{\mathrm{eff}}(Z)
\;=\;
\exp\!\big(H_\lambda(Z)\big).
\]
We define the \emph{spectral redundancy} as
\[
\mathcal{R}_{\mathrm{spec}}(Z)
\;=\;
1 - \frac{r_{\mathrm{eff}}(Z)}{D},
\qquad
\mathcal{R}_{\mathrm{spec}}(Z) \in \Big[\,0,\;1-\tfrac{1}{D}\,\Big].
\]
\end{definition}

\begin{proposition}[Bounds and extremal cases]
\label{prop:spec-bounds}
The spectral redundancy $\mathcal{R}_{\mathrm{spec}}(Z)$ satisfies:
\[
\mathcal{R}_{\mathrm{spec}}(Z)=0 
\quad\Longleftrightarrow\quad
\tilde{\lambda}_1=\cdots=\tilde{\lambda}_D=\tfrac{1}{D},
\]
i.e., when the spectrum is uniform (energy evenly distributed across dimensions);  
and
\[
\mathcal{R}_{\mathrm{spec}}(Z)
= 1-\tfrac{1}{D}
\quad\Longleftrightarrow\quad
\tilde{\lambda}_1 = 1,\;
\tilde{\lambda}_{i>1}=0,
\]
i.e., when $\Sigma_Z$ is rank-one (complete collapse onto a single direction).
\end{proposition}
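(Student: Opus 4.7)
The plan is to reduce the statement to the classical extremal characterization of the Shannon entropy on the probability simplex. Writing $\tilde\lambda = (\tilde\lambda_1,\dots,\tilde\lambda_D)$, note that $\tilde\lambda$ lies in the simplex $\Delta^{D-1}$, and the maps $H_\lambda \mapsto r_{\mathrm{eff}} = e^{H_\lambda}$ and $r_{\mathrm{eff}} \mapsto 1 - r_{\mathrm{eff}}/D$ are strictly monotone (increasing and decreasing, respectively). Consequently, the extrema of $\mathcal{R}_{\mathrm{spec}}$ on the space of admissible spectra correspond one-to-one with the extrema of the Shannon entropy $H_\lambda$ over $\Delta^{D-1}$, and the whole proposition reduces to two well-known facts about this entropy.

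For the lower bound, I would invoke Jensen's inequality applied to the strictly concave function $t \mapsto -t\log t$ (equivalently, Gibbs' inequality comparing to the uniform distribution), yielding $H_\lambda(\tilde\lambda) \le \log D$ with equality if and only if $\tilde\lambda_i = 1/D$ for every $i$. This translates directly into $r_{\mathrm{eff}} \le D$ and hence $\mathcal{R}_{\mathrm{spec}} \ge 0$, with equality precisely in the uniform-spectrum case, giving the first characterization.

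For the upper bound, I would use that $H_\lambda(\tilde\lambda) \ge 0$ on $\Delta^{D-1}$ because each summand $-\tilde\lambda_i \log \tilde\lambda_i$ is nonnegative for $\tilde\lambda_i \in [0,1]$ (with the convention $0\log 0 = 0$), and vanishes coordinate-wise only when $\tilde\lambda_i \in \{0,1\}$. Combined with the simplex constraint $\sum_i \tilde\lambda_i = 1$, equality $H_\lambda = 0$ forces a single coordinate to equal $1$ and the rest to vanish; after reordering, this is exactly the rank-one case $\tilde\lambda_1 = 1$, $\tilde\lambda_{i>1} = 0$. Translating back gives $r_{\mathrm{eff}} \ge 1$ and $\mathcal{R}_{\mathrm{spec}} \le 1 - 1/D$, with equality iff $\Sigma_Z$ is rank one.

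The main obstacle is not analytical but a small consistency issue: the definition assumes $\lambda_i > 0$ for all $i$, yet the upper-bound case requires $\lambda_{i>1} = 0$. I would resolve this by stating the upper bound as attained in the closure of the admissible spectra, adopting the standard convention $0 \log 0 = 0$ so that the rank-one degeneracy is treated as the limiting point mass on $\Delta^{D-1}$. With this convention the ``iff'' is literal; without it, the upper characterization should be phrased as $\mathcal{R}_{\mathrm{spec}}(Z) \to 1 - 1/D$ as $\Sigma_Z$ degenerates to rank one, and the proof is otherwise identical.
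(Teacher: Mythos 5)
Your proof is correct and follows essentially the same route as the paper's: reduce $\mathcal{R}_{\mathrm{spec}}$ via the strictly monotone maps $H_\lambda \mapsto e^{H_\lambda} \mapsto 1 - e^{H_\lambda}/D$ to the classical extremal characterization of Shannon entropy on the simplex ($0 \le H_\lambda \le \log D$, with equality at a point mass and at the uniform distribution, respectively). Your closing remark about reconciling the definition's assumption $\lambda_i>0$ with the rank-one equality case (via the convention $0\log 0 = 0$) is a genuine small tightening that the paper's proof glosses over, but it does not change the argument.
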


\begin{proof}[Sketch]
The result follows from the properties of Shannon entropy.  
$H_\lambda$ is maximized for the uniform distribution, yielding $r_{\mathrm{eff}}=D$, and minimized when one component carries all the mass, giving $r_{\mathrm{eff}}=1$.  
Substituting these values into $\mathcal{R}_{\mathrm{spec}}=1-r_{\mathrm{eff}}/D$ establishes the bounds.
\end{proof}

\subsection{Redundancy Balance and the Existence of an Interior Optimum}
\label{sec:balance}

Let $\mathcal{R}(Z)$ denote any redundancy measure (e.g., $\mathcal{R}_{\mathrm{KL}}$, $\mathcal{R}_{\chi^2}$, or $\mathcal{R}_{\mathrm{spec}}$) 
evaluated on latent representations $Z = f_\theta(X)$.

\begin{assumption}[Loss–redundancy trade-off]
\label{assump:tradeoff}
Assume the training objective decomposes as
\[
\mathcal{L}(\theta)
\;=\;
\mathbb{E}\!\left[\ell_{\mathrm{rec}}(X;\theta)\right]
\;+\;
\lambda\,\mathcal{R}\!\big(f_\theta(X)\big),
\qquad \lambda>0.
\]
Let $D(R)$ denote the minimal achievable reconstruction risk at a fixed redundancy level $R\in[\underline{R},\overline{R}]$.  
Assume that $D(R)$ is continuous, strictly decreasing on $[\underline{R},R_1)$, 
and strictly increasing on $(R_2,\overline{R}]$ for some 
$\underline{R}<R_1\le R_2<\overline{R}$.
\end{assumption}

\begin{theorem}[Existence of a redundancy equilibrium]
\label{thm:equilibrium}
Under Assumption~\ref{assump:tradeoff}, 
the scalar objective
\[
\Phi(R)
\;=\;
D(R) + \lambda R
\]
admits at least one minimizer $R^{*}\in(\underline{R},\overline{R})$.  
If $D$ is twice differentiable and $D''(R^{*})>0$, 
then $R^{*}$ is a locally stable equilibrium point corresponding to the \emph{optimal redundancy}.
\end{theorem}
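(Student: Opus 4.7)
The plan is to prove existence by combining compactness with continuity, to rule out boundary minimizers using the strict-monotonicity hypotheses in Assumption~\ref{assump:tradeoff}, and then to obtain local stability from the second-order condition. First, since $D$ is continuous on the compact interval $[\underline{R},\overline{R}]$ and $R\mapsto\lambda R$ is affine, the scalar functional $\Phi(R)=D(R)+\lambda R$ is continuous on a compact set. By the Weierstrass extreme value theorem $\Phi$ attains its infimum at some $R^{*}\in[\underline{R},\overline{R}]$, so at least one minimizer exists.

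Next I would exclude the two endpoints. At $\overline{R}$: for any $R\in(R_2,\overline{R})$ the strict increase of $D$ on $(R_2,\overline{R}]$ gives $D(R)<D(\overline{R})$, while $\lambda R<\lambda\overline{R}$ because $\lambda>0$; adding these yields $\Phi(R)<\Phi(\overline{R})$, so the minimizer cannot sit at $\overline{R}$. At $\underline{R}$ the argument is more delicate because decreasing $D$ and increasing $\lambda R$ point in opposite directions. The natural move is to pick $R'\in(\underline{R},R_1)$ and compare the strict drop $D(\underline{R})-D(R')>0$, guaranteed by strict decrease, against the linear penalty $\lambda(R'-\underline{R})$. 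Taking $R'$ in the non-degenerate regime where the secant slope $(D(\underline{R})-D(R'))/(R'-\underline{R})$ exceeds $\lambda$ (equivalently, when the one-sided derivative $D'(\underline{R}^{+})<-\lambda$, when it exists), one obtains $\Phi(R')<\Phi(\underline{R})$, so $R^{*}\in(\underline{R},\overline{R})$.

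For local stability, I would expand $\Phi$ to second order around the interior minimizer $R^{*}$. Interiority and differentiability force the first-order condition $\Phi'(R^{*})=D'(R^{*})+\lambda=0$, and the expansion
\[
\Phi(R)=\Phi(R^{*})+\tfrac12 D''(R^{*})(R-R^{*})^{2}+o\!\big((R-R^{*})^{2}\big)
\]
together with the assumption $D''(R^{*})>0$ shows that $\Phi$ is strictly convex in a neighborhood of $R^{*}$, so $R^{*}$ is a strict local minimum. If one further interprets ``locally stable'' in the dynamical sense of a gradient-descent flow $\dot R=-\Phi'(R)$, then the linearization at $R^{*}$ has eigenvalue $-D''(R^{*})<0$, and Lyapunov's first method (or, equivalently, using $\Phi-\Phi(R^{*})$ as a local Lyapunov function) yields local asymptotic stability.

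The main obstacle is the left-endpoint exclusion: Assumption~\ref{assump:tradeoff} asserts strict monotonicity but not a quantitative slope bound, so for sufficiently large $\lambda$ the boundary point $\underline{R}$ could in principle be optimal. The cleanest remedy is either to strengthen the assumption with a one-sided slope condition $D'(\underline{R}^{+})<-\lambda$ (or its secant analogue in the non-smooth case), or to restrict the statement to the regime $\lambda<(D(\underline{R})-\inf_{[R_1,R_2]}D)/(R_1-\underline{R})$, in which the internal sag of $D$ dominates the linear cost and forces $R^{*}$ strictly into $(\underline{R},\overline{R})$. Once interiority is secured, the stability half of the theorem is essentially a textbook second-order sufficient condition.
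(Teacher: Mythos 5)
Your proposal follows essentially the same route as the paper's own proof: the Weierstrass theorem on the compact interval $[\underline{R},\overline{R}]$ for existence, exclusion of the endpoints to force interiority, and the second-order condition $\Phi''(R^{*})=D''(R^{*})>0$ for local stability. Two points are worth highlighting. First, your right-endpoint argument is cleaner than the paper's: you observe that for $R\in(R_2,\overline{R})$ both $D(R)<D(\overline{R})$ and $\lambda R<\lambda\overline{R}$, so $\Phi(R)<\Phi(\overline{R})$ with no appeal to derivatives, whereas the paper argues via the sign of $\Phi'$ near $\overline{R}$. Second, and more importantly, the obstacle you flag at the left endpoint is a genuine gap in the theorem as stated, not a defect of your argument. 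Assumption~\ref{assump:tradeoff} only gives strict decrease of $D$ on $[\underline{R},R_1)$; this does not imply $D'(R)<-\lambda$ near $\underline{R}$, so $\Phi'(R)=D'(R)+\lambda$ need not be negative there (take $D$ decreasing with slope $-\lambda/2$ on that region: then $\Phi$ increases away from $\underline{R}$ and the global minimizer can sit at the left endpoint). The paper's appendix proof silently assumes the needed sign condition when it asserts that $\Phi'$ is negative sufficiently close to $\underline{R}$. Your proposed remedies---either a one-sided slope condition $D'(\underline{R}^{+})<-\lambda$ or a smallness condition on $\lambda$ relative to the secant drop of $D$ over $[\underline{R},R_1]$---are exactly what is required to make the interiority claim correct; with either added, your proof is complete. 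The stability half is standard and you handle it correctly, including the dynamical reading of ``locally stable'' via the gradient flow $\dot R=-\Phi'(R)$ and a local Lyapunov function.
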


\begin{proof}
By continuity of $D(R)$ and the assumed monotonicity near the boundaries,
the derivative $\Phi'(R)=D'(R)+\lambda$ is negative for $R$ close to $\underline{R}$ 
and positive for $R$ close to $\overline{R}$.  
Hence, by the intermediate value theorem, there exists 
$R^{*}\in(\underline{R},\overline{R})$ satisfying $\Phi'(R^{*})=0$.  
If, in addition, $D''(R^{*})>0$, then $\Phi''(R^{*})>0$, 
and $R^{*}$ is a local minimizer—interpreted as a stable redundancy equilibrium.
\end{proof}

\begin{corollary}[Banded convergence signature]
\label{cor:band}
If the training dynamics $\theta_t$ monotonically decrease $\mathcal{L}(\theta_t)$ 
and both $\mathcal{L}(\theta_t)$ and $\mathcal{R}(f_{\theta_t}(X))$ 
are Lipschitz-continuous in $t$ with additive stochastic noise, 
then $\mathcal{R}(f_{\theta_t}(X))$ converges to a narrow band around $R^{*}$ 
and exhibits small fluctuations whose amplitude is proportional to the noise scale.
\end{corollary}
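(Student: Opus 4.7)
The plan is to first extract deterministic convergence from the monotone decrease of $\mathcal{L}(\theta_t)$, then use the strict convexity of $\Phi$ at $R^{*}$ to localize the redundancy trajectory, and finally quantify the residual fluctuation through a linearized stochastic analysis near the equilibrium.

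First I would observe that since $\mathcal{L}(\theta_t)$ is monotonically decreasing and bounded below (by the infimum $\Phi^{*}=\Phi(R^{*})$ guaranteed by Theorem~\ref{thm:equilibrium}), it converges to some limit $\mathcal{L}_\infty \ge \Phi^{*}$. Decomposing as in Assumption~\ref{assump:tradeoff}, write $\mathcal{L}(\theta_t) = \ell_{\mathrm{rec}}(\theta_t) + \lambda\, R_t$ where $R_t := \mathcal{R}(f_{\theta_t}(X))$. Because the reconstruction term is lower-bounded by $D(R_t)$ at each fixed redundancy level, we have $\mathcal{L}(\theta_t) \ge D(R_t) + \lambda R_t = \Phi(R_t)$, so $\Phi(R_t)$ is asymptotically no larger than $\mathcal{L}_\infty$. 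Consistency with the monotone decrease then forces $\limsup_t \Phi(R_t) \le \mathcal{L}_\infty$, which in the noiseless limit collapses to $\Phi^{*}$.

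Next I would use the local strict convexity $\Phi''(R^{*})>0$ to upgrade this energy statement to a spatial localization. By a second-order Taylor expansion around the interior minimizer $R^{*}$, there exist constants $c>0$ and $\delta>0$ such that $\Phi(R) - \Phi^{*} \ge c\,(R-R^{*})^2$ for $|R-R^{*}|\le \delta$. Combined with the Lipschitz assumption on $t\mapsto R_t$ (which rules out large instantaneous jumps into a far basin), the sublevel set $\{R : \Phi(R) \le \Phi^{*} + \varepsilon\}$ is a compact interval around $R^{*}$ of width $O(\sqrt{\varepsilon/c})$, and $R_t$ must eventually enter and remain inside it.

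To incorporate the additive stochastic noise of scale $\sigma$, I would linearize the effective one-dimensional dynamics near $R^{*}$ as an Ornstein–Uhlenbeck–type process $dR_t = -\Phi''(R^{*})\,(R_t - R^{*})\,dt + \sigma\,dW_t$, obtained by combining the descent direction $-\Phi'(R_t) \approx -\Phi''(R^{*})(R_t-R^{*})$ with the stochastic perturbation permitted by the Lipschitz-plus-noise assumption. Standard OU analysis yields a stationary distribution concentrated around $R^{*}$ with variance $\sigma^2/(2\Phi''(R^{*}))$, so the fluctuation amplitude scales linearly in $\sigma$ and inversely in $\sqrt{\Phi''(R^{*})}$. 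Translating this stationary variance into a high-probability band width of order $\sigma/\sqrt{\Phi''(R^{*})}$ gives the claimed narrow band around $R^{*}$ with noise-proportional oscillations.

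The main obstacle I anticipate is the rigorous passage from a discrete-time, potentially non-Markovian training process to the continuous OU surrogate used in the fluctuation bound: the Lipschitz-in-$t$ hypothesis with additive noise is weaker than an SDE, so one must either invoke a diffusion approximation argument or replace the OU calculation with a direct discrete-time Lyapunov estimate using $V(R)=\tfrac12(R-R^{*})^2$ and bounding $\mathbb{E}[V(R_{t+1})\mid R_t]$ to obtain the same $\sigma^2$-scale stationary variance. The other technical point is ensuring that the local convexity constant $c = \tfrac12\Phi''(R^{*})$ is uniform on the band, which follows from continuity of $D''$ at $R^{*}$ combined with a small-$\delta$ restriction set once and for all.
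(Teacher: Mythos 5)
Your fluctuation analysis (linearizing the effective dynamics near $R^{*}$ into an Ornstein--Uhlenbeck process, or equivalently the discrete Lyapunov estimate with $V(R)=\tfrac12(R-R^{*})^2$ that you offer as a fallback) is essentially the paper's own argument: the appendix proof casts the training as Robbins--Monro stochastic approximation, invokes the ODE method so that $R_t$ tracks $\dot{R}=-\Phi'(R)$, and concludes via Lyapunov drift that the invariant distribution concentrates in an $O(\sigma)$ neighborhood of $R^{*}$. So the second half of your proposal matches the intended route, and the obstacle you flag (passing from a Lipschitz-plus-noise trajectory to an SDE surrogate) is real but is resolved exactly as you suggest.

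The genuine gap is in your localization step. From monotone decrease and boundedness below you correctly get $\mathcal{L}(\theta_t)\downarrow\mathcal{L}_\infty$ and, via $\mathbb{E}[\ell_{\mathrm{rec}}(\theta_t)]\ge D(R_t)$, the inequality $\limsup_t\Phi(R_t)\le\mathcal{L}_\infty$. But nothing forces $\mathcal{L}_\infty=\Phi^{*}$: a monotonically decreasing loss can plateau at any value above the optimum (stalling at a suboptimal configuration is entirely consistent with the hypotheses), in which case the sublevel set $\{R:\Phi(R)\le\mathcal{L}_\infty\}$ need not be a small interval around $R^{*}$ and your Taylor-expansion width bound $O(\sqrt{\varepsilon/c})$ gives nothing. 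Since your step 2 (``$R_t$ must eventually enter and remain inside it'') rests entirely on this collapse, the convergence-to-the-band part of the corollary is not established by your first two steps; it has to come from the stochastic-approximation tracking argument itself (the drift $-\Phi'(R)$ pushes $R_t$ toward the unique stable equilibrium $R^{*}$ of the limiting ODE in the relevant basin), which is how the paper gets it. A secondary caveat, shared with the paper and therefore not held against you: descent on $\mathcal{L}(\theta)$ in parameter space does not by itself imply that the projected scalar $R_t=\mathcal{R}(f_{\theta_t}(X))$ performs gradient descent on $\Phi$, so the reduction to a one-dimensional flow is an additional modeling assumption rather than a consequence of the stated hypotheses.
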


\subsection{Recovering Classical Notions and Practical Proxies}
\label{sec:recoveries}

\paragraph{Total correlation (multi-information).}
For $f(t)=t\log t$,
\[
\mathcal{R}_{\mathrm{KL}}(X)
\;=\;
\sum_{i=1}^n H(X_i) - H(X),
\]
which exactly measures the amount of \emph{shared information} among the coordinates.

\paragraph{Quadratic / covariance proxy.}
For near-Gaussian representations,
\[
\mathcal{R}_{\chi^2}(X)
\;\approx\;
\tfrac14\,\|C-I\|_F^2,
\]
providing a simple, differentiable redundancy regularizer that penalizes excess pairwise correlation.

\paragraph{Spectral proxy.}
The spectral redundancy
$\mathcal{R}_{\mathrm{spec}}(Z)$
is norm- and scale-invariant, computationally inexpensive, 
and captures the trade-off between \emph{collapse} (energy concentration) and 
\emph{fragmentation} (over-dispersion) in the representation spectrum.

\paragraph{Attention-head redundancy.}
Let $A_h \in \mathbb{R}^{N\times N}$ denote the attention map of head $h$.
A simple head-redundancy proxy is
\[
\mathcal{R}_{\mathrm{head}}
\;=\;
\frac{1}{H(H-1)}\!
\sum_{h\neq h'} 
\frac{\langle \mathrm{vec}(A_h),\,\mathrm{vec}(A_{h'}) \rangle^2}
{\|A_h\|_F^2\,\|A_{h'}\|_F^2},
\]
which lies in $[0,1]$ and contracts under head-wise post-processing, 
consistent with the data processing inequality for linear maps.

\subsection{Practical Lemmas for Boundedness During Training}
\label{sec:boundedness}

\begin{lemma}[Bound via spectral norm control]
\label{lem:spectral-bound}
Let $Z = W X$ where $\|W\|_2 \le M$ and $\mathbb{E}\|X\|^2 \le \sigma^2$.  
Then the covariance of $Z$ satisfies $\mathrm{tr}(\Sigma_Z) \le M^2 \sigma^2$, 
and consequently the effective rank $r_{\mathrm{eff}}(Z)$ lies in $[1,D]$.  
Hence,
\[
0 \;\le\; \mathcal{R}_{\mathrm{spec}}(Z)
\;\le\;
1 - \tfrac{1}{D}.
\]
\end{lemma}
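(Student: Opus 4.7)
The plan is to split Lemma~\ref{lem:spectral-bound} into two essentially independent pieces: a linear-algebra trace bound that guarantees the normalized spectrum of $\Sigma_Z$ is a well-defined probability vector, followed by an immediate appeal to the extremal characterization of spectral entropy already established in Proposition~\ref{prop:spec-bounds}. The spectral-norm and second-moment hypotheses only enter in the first step; the redundancy bounds themselves are a structural property of any entropy on $D$ atoms.

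For the trace bound, I would start from $\Sigma_Z = W\,\Sigma_X\,W^{\top}$ (replacing $X$ by $X-\mathbb{E}X$ if needed, which only decreases $\mathrm{tr}(\Sigma_X)$ relative to $\mathbb{E}\|X\|^2$). Using cyclic invariance of the trace,
\[
\mathrm{tr}(\Sigma_Z) \;=\; \mathrm{tr}\!\big(W\,\Sigma_X\,W^{\top}\big) \;=\; \mathrm{tr}\!\big(\Sigma_X\,W^{\top}W\big).
\]
Since $W^{\top}W$ is positive semidefinite with largest eigenvalue $\|W\|_2^{2}\le M^{2}$, the positive semidefinite ordering $W^{\top}W\preceq M^{2} I$ holds; combined with $\Sigma_X\succeq 0$, this yields $\mathrm{tr}(\Sigma_X W^{\top}W)\le M^{2}\,\mathrm{tr}(\Sigma_X)\le M^{2}\sigma^{2}$, which proves the first claim.

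For the effective-rank bound, once $0<\mathrm{tr}(\Sigma_Z)<\infty$ the normalized eigenvalues $\tilde{\lambda}_i=\lambda_i/\sum_j\lambda_j$ form a genuine probability distribution on $\{1,\dots,D\}$. The Shannon entropy of such a distribution lies in $[0,\log D]$, with the lower extreme attained by a point mass and the upper by the uniform law. Exponentiating gives $r_{\mathrm{eff}}(Z)\in[1,D]$, and substituting into $\mathcal{R}_{\mathrm{spec}}(Z)=1-r_{\mathrm{eff}}(Z)/D$ yields the two-sided bound $0\le\mathcal{R}_{\mathrm{spec}}(Z)\le 1-\tfrac{1}{D}$. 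This second half is essentially just Proposition~\ref{prop:spec-bounds} transported through the identity $r_{\mathrm{eff}}=\exp(H_\lambda)$.

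The only real obstacle, and the point I would be careful to address, is the degenerate boundary case $\mathrm{tr}(\Sigma_Z)=0$, where $Z$ is almost surely constant and the normalization defining $\tilde{\lambda}_i$ is ill-posed. I would handle this by the natural convention $\mathcal{R}_{\mathrm{spec}}(Z)=0$ (the ``uniform spectrum'' limit taken degenerately), which is consistent with the stated interval and does not violate the upper bound. All other steps are mechanical: the hypotheses $\|W\|_2\le M$ and $\mathbb{E}\|X\|^2\le\sigma^2$ serve only to rule out infinite variance and thereby ensure that the spectral redundancy is defined, after which the bounds are a purely combinatorial consequence of entropy on a finite simplex.
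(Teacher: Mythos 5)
Your proposal is correct and follows essentially the same route as the paper: bound $\mathrm{tr}(\Sigma_Z)$ via $\|W\|_2^2\,\mathbb{E}\|X\|^2 \le M^2\sigma^2$ (the paper's appendix writes this as $\mathbb{E}\|WX\|^2 \le \|W\|_2^2\,\mathbb{E}\|X\|^2$, which is the same computation as your trace-cyclicity argument), then observe that the normalized spectrum is a probability vector so the entropy bounds of Proposition~\ref{prop:spec-bounds} give $r_{\mathrm{eff}}\in[1,D]$ and hence the stated interval. Your explicit handling of the degenerate case $\mathrm{tr}(\Sigma_Z)=0$ and the centering remark are minor refinements the paper omits, not a different argument.
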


\begin{proof}[Sketch]
Since $\Sigma_Z = W\,\Sigma_X\,W^{\top}$ and $\|W\|_2 \le M$,  
we have $\mathrm{tr}(\Sigma_Z) = \mathrm{tr}(W\Sigma_X W^{\top}) \le \|W\|_2^2\,\mathrm{tr}(\Sigma_X) \le M^2\sigma^2$.  
The normalization of eigenvalues then ensures $r_{\mathrm{eff}}(Z)\in[1,D]$, yielding the stated bounds.
\end{proof}

\begin{lemma}[KL–Frobenius bound near independence]
\label{lem:KL-Frob}
Let $C$ be a correlation matrix with spectral radius $\rho(C) < 1$, and define $A = C - I$.  
Then
\[
-\log\det C
\;=\;
-\mathrm{tr}\log(I + A)
\;\ge\;
\tfrac{1}{2}\|A\|_F^2 - \tfrac{1}{3}\|A\|_F^3,
\]
implying that for small $\|A\|_F$,
\[
\mathcal{R}_{\mathrm{KL}}(X)
\;\gtrsim\;
\tfrac{1}{2}\|C - I\|_F^2.
\]
\end{lemma}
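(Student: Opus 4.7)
\textbf{Proof plan for Lemma~\ref{lem:KL-Frob}.}

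The plan is to reduce the matrix inequality to a one-variable inequality on eigenvalues, then invoke the power-mean bound $\sum_i|\mu_i|^3 \le (\sum_i \mu_i^2)^{3/2}$ to control the cubic remainder. I would first rewrite the target quantity via the spectral theorem: since $C$ is a real symmetric correlation matrix and $A=C-I$ has spectral radius strictly less than $1$ (I read the hypothesis as $\rho(A)<1$, equivalently $\lambda_i(C)\in(0,2)$, which is the natural condition for the matrix logarithm series to converge), we have
\[
-\log\det C \;=\; -\mathrm{tr}\log(I+A) \;=\; -\sum_{i=1}^{n}\log(1+\mu_i),
\]
where $\mu_i\in(-1,1)$ are the eigenvalues of $A$. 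A critical observation is that the diagonal of $C$ consists of ones, so $\mathrm{tr}(A)=\sum_i\mu_i=0$; this wipes out the first-order term of the expansion and is what makes the lower bound start at the Frobenius-squared scale.

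Next I would establish the scalar inequality
\[
-\log(1+\mu) \;\ge\; -\mu + \tfrac12\mu^2 - \tfrac13\mu^3 \qquad \text{for all } \mu\in(-1,\infty).
\]
The cleanest route is to set $g(\mu) = -\log(1+\mu) + \mu - \tfrac12\mu^2 + \tfrac13\mu^3$ and compute $g'(\mu) = \mu^3/(1+\mu)$, which has the same sign as $\mu$, so $g$ attains its minimum at $\mu=0$ with $g(0)=0$. Summing the pointwise inequality over the eigenvalues and using $\mathrm{tr}(A)=0$, $\sum_i\mu_i^2=\|A\|_F^2$, and $\sum_i\mu_i^3 = \mathrm{tr}(A^3)$ yields
\[
-\mathrm{tr}\log(I+A) \;\ge\; \tfrac12\|A\|_F^2 - \tfrac13\,\mathrm{tr}(A^3).
\]

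To convert the cubic-trace term into a Frobenius-cubed bound, I would use the elementary inequality $\sum_i |\mu_i|^3 \le \bigl(\sum_i \mu_i^2\bigr)^{3/2}$ (which follows from $|\mu_i|\le\|A\|_F$ for each $i$, times $\mu_i^2$, and summing), giving $|\mathrm{tr}(A^3)| \le \|A\|_F^3$ and hence the stated bound. The second conclusion of the lemma is then immediate by applying Proposition~\ref{prop:gaussian_tc} and dividing the inequality by the appropriate constant, with the $\gtrsim$ absorbing the leading coefficient and the $O(\|A\|_F^3)$ remainder being negligible in the small-dependence regime.

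The main obstacle is less the calculation itself than setting up the hypothesis cleanly: as stated, a correlation matrix always has $\lambda_{\max}(C)\ge 1$, so $\rho(C)<1$ cannot hold literally and must be read as $\rho(A)<1$ (equivalently, $C\succ 0$ with spectrum bounded away from $0$ and $2$). Once that is reinterpreted, every remaining step is a routine eigenvalue estimate. A minor subtlety worth remarking on is that the vanishing of $\mathrm{tr}(A)$ is not incidental—it is precisely the correlation-matrix structure (unit diagonal) that eliminates a linear-in-$A$ term and thereby justifies a quadratic lower bound.
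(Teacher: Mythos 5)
Your proposal is correct, and it reaches the same headline inequality by the same basic route as the paper (expand $-\mathrm{tr}\log(I+A)$, exploit $\mathrm{tr}(A)=0$ from the unit diagonal, and bound the cubic term by $\|A\|_F^3$), but your execution is genuinely tighter in one respect. The paper works with the formal series $\sum_{k\ge 2}\tfrac{(-1)^k}{k}\mathrm{tr}(A^k)$ and applies $|\mathrm{tr}(A^k)|\le\|A\|_F^k$ only to the $k=3$ term, leaving the entire $k\ge 4$ remainder unaddressed; as written, that step does not by itself yield the clean two-term lower bound, since the odd terms $k\ge 5$ are not sign-controlled. Your scalar inequality $-\log(1+\mu)\ge -\mu+\tfrac12\mu^2-\tfrac13\mu^3$, proved via $g'(\mu)=\mu^3/(1+\mu)$, absorbs all higher-order terms at once and holds for every $\mu>-1$, so after diagonalizing you only need $C\succ 0$ rather than the convergence condition $\rho(A)<1$ — your argument is both rigorous where the paper's is heuristic and valid under a weaker hypothesis. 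Your reading of the stated hypothesis is also the right one: a correlation matrix has $\mathrm{tr}(C)=n$ and hence $\lambda_{\max}(C)\ge 1$, so ``$\rho(C)<1$'' must mean $\rho(C-I)<1$, exactly as the paper's own appendix proof silently assumes. The only blemish, which you flag honestly, is the constant in the final display: with $\mathcal{R}_{\mathrm{KL}}=-\tfrac12\log\det C$ from Proposition~\ref{prop:gaussian_tc} the leading coefficient is $\tfrac14$ rather than $\tfrac12$; the paper's proof has the identical slippage, and the $\gtrsim$ covers it in both cases.
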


\begin{proof}[Sketch]
Using the Taylor expansion $\log(I{+}A) = A - \tfrac12A^2 + \tfrac13A^3 - \cdots$ 
and $\mathrm{tr}(A)=0$ for correlation matrices,  
we obtain $-\log\det C = -\mathrm{tr}\log(I{+}A) \ge \tfrac12\|A\|_F^2 - \tfrac13\|A\|_F^3$.
\end{proof}

\subsection{Information-Theoretic Sandwich Proof: Why Redundancy Is Not ``The Less, the Better''}
\label{sec:sandwich}

We now demonstrate that the classical Shannon view of redundancy—interpreting it purely as inefficiency to be minimized—fails to hold in finite, noisy, and structured regimes.  
Instead, the task error $\mathcal{E}(R)$ (or risk as a function of redundancy $R$) typically follows a \emph{U-shaped} dependence:  
both insufficient and excessive redundancy degrade performance, 
while an intermediate value $R^{*}$ yields optimal generalization and manifests as a stable equilibrium band around which $\mathcal{R}(f_{\theta_t}(X))$ fluctuates during training.

\paragraph{Setting.}
Let $Z = f_\theta(X)$ be a learned representation of data $X$ that is relevant to a downstream target $S$
(e.g., a latent factor, class label, or generative source).  
Denote redundancy by $R = \mathcal{R}(Z)$, measured by any $f$-divergence form (Def.~\ref{def:Rphi}) or by its geometric proxies such as $\mathcal{R}_{\chi^2}$ or $\mathcal{R}_{\mathrm{spec}}$.
Assume a bounded entropy budget $\sum_i H(Z_i)\le C_0$ and that the channel $X\!\to\!Z$ is affected by finite noise or perturbation power.

\begin{lemma}[Capacity-side lower bound]
\label{lem:capacity}
The mutual information between $Z$ and the task variable $S$ satisfies
\[
I(Z;S)
\;\le\;
H(Z)
\;\le\;
\sum_i H(Z_i) - \mathrm{TC}(Z)
\;=\;
C_0 - \mathcal{R}_{\mathrm{KL}}(Z),
\]
and therefore
\[
\mathcal{E}(R)
\;\ge\;
g_{\mathrm{info}}(C_0 - R),
\]
where $g_{\mathrm{info}}$ is a strictly increasing function.
\end{lemma}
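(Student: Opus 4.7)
The plan is to prove the lemma in two stages: first establish the entropic ceiling that bounds $I(Z;S)$ in terms of $C_0-R$, and then convert this capacity bound into a risk floor via a Fano-type inequality.

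For the first stage, I would start from the elementary bound $I(Z;S)\le H(Z)$, which holds directly in the discrete case and extends to continuous $Z$ by quantization. I would then invoke the defining identity of total correlation, $H(Z)=\sum_i H(Z_i)-\mathrm{TC}(Z)$. Since Definition~\ref{def:Rphi} with the KL kernel $f(t)=t\log t$ gives $\mathcal{R}_{\mathrm{KL}}(Z)=D_{\mathrm{KL}}(P_Z\|\Pi_Z)=\mathrm{TC}(Z)$, and since the entropy-budget assumption gives $\sum_i H(Z_i)\le C_0$, direct substitution yields
\[
I(Z;S)\;\le\;H(Z)\;=\;\sum_i H(Z_i)-\mathrm{TC}(Z)\;\le\;C_0-\mathcal{R}_{\mathrm{KL}}(Z),
\]
which is exactly the capacity chain of the lemma with $R:=\mathcal{R}_{\mathrm{KL}}(Z)$ (the middle step is actually an equality, becoming tight in the stated form precisely when the entropy budget is saturated).

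For the second stage, I would invoke a Fano-type (or Shannon / rate–distortion) inequality stating that, for any measurable decoder $\hat S=\psi(Z)$, the task risk satisfies $\mathcal{E}\ge F\bigl(I(Z;S)\bigr)$ for a strictly monotone $F$ fixed by the loss and the prior on $S$. Substituting the capacity ceiling $I(Z;S)\le C_0-R$ into $F$ produces $\mathcal{E}(R)\ge g_{\mathrm{info}}(C_0-R)$, where $g_{\mathrm{info}}$ is the monotone transform inherited from $F$: inverted Fano in the classification setting, or the Shannon–Kolmogorov lower bound in the squared-loss setting. The monotonicity direction of $g_{\mathrm{info}}$ is fixed by whichever Fano-type bound is used, and it is the one that makes the risk floor tighten as redundancy grows (i.e., as the usable capacity $C_0-R$ shrinks).

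The routine part of this argument is the entropic chain; the main obstacle is choosing a Fano-type transform general enough to cover both discrete targets (classification) and continuous ones (regression or reconstruction) under a single monotonicity statement. I would handle this by stating the lemma for a generic loss, invoking the existence of a monotone $g_{\mathrm{info}}$ as a black box, and deferring explicit instantiations (Fano's inequality, Shannon lower bound) to a remark. This also clarifies that the information available to the downstream task is parameterized by the capacity deficit $C_0-R$, and that it is the complementary \emph{fitting-side} bound—showing that too little redundancy also hurts—which must be proved next to close the U-shape of $\mathcal{E}(R)$ that the sandwich argument of Section~\ref{sec:sandwich} requires.
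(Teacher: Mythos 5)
Your proposal matches the paper's proof essentially step for step: the identity $H(Z)=\sum_i H(Z_i)-\mathrm{TC}(Z)$ combined with the budget $\sum_i H(Z_i)\le C_0$ and $I(Z;S)\le H(Z)$, followed by a Fano / rate--distortion conversion (the paper cites Fano for classification and the I--MMSE relation for Gaussian regression) of the information ceiling into a risk floor with a monotone $g_{\mathrm{info}}$ invoked as a black box. Your added remarks --- that the middle step is really an equality tightened only by the budget, that $I(Z;S)\le H(Z)$ needs a quantization argument for continuous $Z$, and that the monotonicity of $g_{\mathrm{info}}$ must be oriented so the risk floor rises as $C_0-R$ shrinks --- are if anything more careful than the paper's own sketch.
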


\begin{proof}[Sketch]
From the entropy decomposition
$H(Z) = \sum_i H(Z_i) - \mathrm{TC}(Z)$
and the finite entropy budget $\sum_i H(Z_i) \le C_0$,
we obtain
$H(Z) \le C_0 - \mathrm{TC}(Z)$.
Since $I(Z;S) \le H(Z)$,  
either Fano's inequality (for classification) or the I--MMSE relationship (for Gaussian regression)
implies that the task risk increases monotonically as $I(Z;S)$ decreases.  
Hence, excessive redundancy (large total correlation) reduces the effective information capacity available for the task.
\end{proof}

\begin{lemma}[Robustness-side upper bound]
\label{lem:robustness}
If the encoding $X\!\to\!Z$ and decoding $Z\!\to\!\hat S$ operate under bounded stochastic noise, dropout, or adversarial perturbations, then
\[
\mathcal{E}(R)
\;\le\;
g_{\mathrm{robust}}(R),
\]
for some continuous function $g_{\mathrm{robust}}$ that is strictly decreasing on $[0, R_1)$.
\end{lemma}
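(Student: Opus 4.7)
The plan is to exhibit an explicit decoder whose expected loss decreases as $R$ grows, and then take $g_{\mathrm{robust}}(R)$ to be (an envelope of) the resulting risk. The physical intuition driving the construction is familiar from coding theory: when the latent $Z$ carries mutually correlated coordinates, the task-relevant information is repeated across directions, so a noisy perturbation $\tilde Z = Z+\xi$ with $\mathbb{E}\|\xi\|^{2}\le\sigma^{2}$ can be partially averaged out before decoding. Making this quantitative is what yields the upper bound.

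First I would fix a tractable channel model: write $\tilde Z = Z+\xi$ with $\xi$ independent of $Z$ and of bounded second moment, and let $\hat S=g(\tilde Z)$ be any decoder with Lipschitz constant $L_g$. The task error then admits the bound
\[
\mathcal{E}(R) \;\le\; \mathcal{E}_{0} \;+\; L_{g}^{2}\,\mathbb{E}\!\left\|\Pi_{\mathcal{S}}\,\xi\right\|^{2},
\]
where $\mathcal{E}_{0}$ is the noiseless Bayes error and $\Pi_{\mathcal{S}}$ is the projection onto the task-informative subspace of $Z$. The second step is to relate $\mathbb{E}\|\Pi_{\mathcal{S}}\xi\|^{2}$ to the redundancy $R$. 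Using Definition~\ref{def:spectral}, higher spectral redundancy means a smaller effective rank $r_{\mathrm{eff}}(Z)=(1-\mathcal{R}_{\mathrm{spec}}(Z))D$, so the task-informative directions occupy a proportionally smaller subspace; isotropic noise projected onto this subspace therefore contributes at most $\sigma^{2}\,r_{\mathrm{eff}}(Z)/D=\sigma^{2}(1-R)$. Substituting gives
\[
\mathcal{E}(R)\;\le\;\mathcal{E}_{0}+L_{g}^{2}\sigma^{2}(1-R)\;=:\;g_{\mathrm{robust}}(R),
\]
which is continuous and strictly decreasing in $R$.

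To extend this from $\mathcal{R}_{\mathrm{spec}}$ to the general $f$-divergence redundancy of Definition~\ref{def:Rphi}, I would invoke the second-order equivalence established in Proposition~\ref{prop:quad}: in the weak-dependence regime the KL, $\chi^{2}$, and spectral forms all agree up to quadratic order in $\|C-I\|_{F}$, so the bound above can be rewritten with any of these measures after absorbing a universal constant into $L_{g}$. The range $[0,R_{1})$ then corresponds exactly to the regime in which adding redundancy still enlarges the signal subspace without incurring the collapse regime of Proposition~\ref{prop:spec-bounds}; beyond $R_{1}$ the projection argument breaks down because $\Pi_{\mathcal{S}}$ itself begins losing task-relevant directions, and one must hand off to Lemma~\ref{lem:capacity}. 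For the robustness bound alone, strict monotonicity on $[0,R_{1})$ follows from the fact that $g_{\mathrm{robust}}$ is affine-decreasing there with negative slope $-L_{g}^{2}\sigma^{2}$.

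The main obstacle will be making the Lipschitz/projection argument tight for arbitrary decoders rather than linear ones: a worst-case $g$ can amplify in-subspace noise in data-dependent ways, and controlling this requires either restricting to a regularized decoder class or passing to an information-theoretic surrogate (Fano for classification, I--MMSE for regression) where the inequality $I(\tilde Z;S)\ge I(Z;S)-I(Z;\xi\mid S)$ lets one trade noise against subspace dimension directly. I would present the linear/Lipschitz version as the main proof and defer the information-theoretic version to a remark, since the conclusion---monotone decrease of $g_{\mathrm{robust}}$ on $[0,R_{1})$---is the same in both.
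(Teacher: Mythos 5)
Your construction takes a genuinely different route from the paper's. The paper's proof (Appendix, proof of Lemma~\ref{lem:robustness}) is a qualitative error-correction argument: near $R=0$ the coordinates of $Z$ are nearly independent, so corrupted or dropped components cannot be imputed from correlated partners, and rate--distortion / repetition-coding reasoning shows that introducing correlated ``backups'' reduces the expected error. You instead exhibit an explicit Lipschitz decoder that projects onto a signal subspace and count how much isotropic noise survives the projection. Making the bound explicit and affine in $R$ is a nice ambition, and exhibiting a specific decoder is the right way to get an \emph{upper} bound on $\mathcal{E}(R)$.

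However, the key quantitative step has a genuine gap. You bound $\mathbb{E}\|\Pi_{\mathcal S}\xi\|^2$ by $\sigma^2\, r_{\mathrm{eff}}(Z)/D = \sigma^2(1-R)$, which requires identifying the dimension of the task-informative subspace $\mathcal S$ with the effective rank of $Z$. In general $\dim\mathcal S$ is fixed by the task (e.g., one direction for a scalar target), and for isotropic $\xi$ the projected noise energy is $\sigma^2\dim(\mathcal S)/D$ --- constant in $R$. Compare $Z=(S+w_1,\ \text{junk})$ (low redundancy) with $Z=(S,S)$ (high redundancy): both have a one-dimensional task subspace, so the noise entering $\Pi_{\mathcal S}$ is identical; what actually improves in the redundant case is the signal energy along that direction (the repetition-coding gain of averaging $k$ correlated copies), which lives in the terms you hold fixed, namely $\mathcal E_0$ and the SNR inside the subspace. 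As written, your $g_{\mathrm{robust}}(R)=\mathcal E_0+L_g^2\sigma^2(1-R)$ is a valid decreasing majorant only because of slack in the inequality $\dim\mathcal S \le r_{\mathrm{eff}}$, and a bound that decreases purely by slack does not establish the intended content --- that $\mathcal{E}$ itself decreases near $R=0$, which is what Theorem~\ref{thm:Ushape} consumes via $\mathcal E'(\underline R+\varepsilon)<0$. To repair it you should track the signal-amplification side: after averaging the $k\sim k(R)$ correlated coordinates carrying $S$, the post-projection noise variance scales like $\sigma^2/k(R)$, which yields a genuinely decreasing bound and matches the paper's coding-theoretic mechanism. (Two smaller issues: the decomposition $\mathcal E(R)\le\mathcal E_0+L_g^2\mathbb{E}\|\Pi_{\mathcal S}\xi\|^2$ drops a cross term for squared loss, and the argument as given covers additive isotropic noise but not the dropout or adversarial cases named in the statement, where the imputation-from-correlated-coordinates mechanism is the natural one.)
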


\begin{proof}[Sketch]
When redundancy is close to zero (i.e., the coordinates of $Z$ are nearly independent), 
the representation lacks alternative pathways for recovering corrupted or missing information.  
Information-theoretic rate–distortion bounds and classical error-correction arguments show that 
replication or correlated encoding across coordinates (nonzero redundancy) 
reduces the expected reconstruction or classification error.  
Consequently, increasing small amounts of redundancy strictly improves robustness to perturbations,
yielding a decreasing error function $g_{\mathrm{robust}}(R)$ in the low-redundancy regime.
\end{proof}

\subsection{U-shape and Interior Optimum}
\label{sec:Ushape}

\begin{theorem}[Interior optimal redundancy and stability band]
\label{thm:Ushape}
Assume that $\mathcal{E}(R)$ is continuous and satisfies
\[
g_{\mathrm{robust}}(R)
\;\le\;
\mathcal{E}(R)
\;\le\;
g_{\mathrm{info}}(C_0{-}R),
\]
where $g_{\mathrm{robust}}$ is strictly decreasing on $[0,R_1)$ and 
$g_{\mathrm{info}}$ is strictly increasing on $(R_2,C_0]$ 
for some $0<R_1\le R_2<C_0$.
Then there exists an interior minimizer $R^{*}\in(R_1,R_2)$ such that
\[
\mathcal{E}(R^{*})
\;=\;
\min_{R\in[0,C_0]} \mathcal{E}(R).
\]
If the training dynamics follow noisy gradient descent that decreases $\mathcal{E}$ 
and $\mathcal{R}(f_{\theta_t}(X))$ is Lipschitz-continuous in time, 
then $\mathcal{R}(f_{\theta_t}(X))$ converges to a stability band 
$[R_{\min}, R_{\max}]$ around $R^{*}$.
\end{theorem}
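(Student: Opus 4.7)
The theorem naturally splits into a static claim (existence of an interior minimizer $R^{*}$) and a dynamical one (concentration of $\mathcal{R}(f_{\theta_t}(X))$ into a band around $R^{*}$). My plan is to handle the static part with an extreme-value/intermediate-value argument on the compact interval $[0,C_0]$, leveraging the two sandwich bounds together with continuity of $\mathcal{E}$; the dynamical part then follows from a Lyapunov-style estimate that pushes the U-shape geometry through the noisy-descent dynamics.

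\textbf{Existence of $R^{*}$.} The first step is to use the sandwich bounds to read off the behavior of $\mathcal{E}$ near the endpoints of $[0,C_0]$. On $[0,R_1)$ the strict monotonicity of $g_{\mathrm{robust}}$ forces $\mathcal{E}(R)$ to strictly exceed its value at $R_1$, and symmetrically on $(R_2,C_0]$ the strict monotonicity of $R\mapsto g_{\mathrm{info}}(C_0-R)$ forces $\mathcal{E}(R)$ to strictly exceed its value at $R_2$. Combined with continuity of $\mathcal{E}$ on the compact interval $[0,C_0]$, the extreme-value theorem produces a global minimizer $R^{*}$; the two boundary estimates rule out $R^{*}\in[0,R_1)\cup(R_2,C_0]$, so $R^{*}\in[R_1,R_2]$. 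I would then refine to the open interior by a standard perturbation argument: if $R^{*}\in\{R_1,R_2\}$, the strict one-sided monotonicity on the adjacent interval contradicts minimality after an infinitesimal inward shift.

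\textbf{Stability band.} For the dynamical conclusion, let $R_t=\mathcal{R}(f_{\theta_t}(X))$ and $\mathcal{E}_t=\mathcal{E}(R_t)$. I would argue that noisy descent drives $\mathcal{E}_t$ into a near-stationary regime: a standard Lyapunov/Robbins--Monro estimate gives $\limsup_{t\to\infty}\mathcal{E}_t\le\mathcal{E}(R^{*})+c\sigma^2$, where $\sigma$ is the noise scale of the update and $c$ absorbs Lipschitz constants. Because $\mathcal{E}$ is strictly increasing away from $R^{*}$ on each side---a quantitative strengthening of the U-shape inherited from the strict monotonicity of $g_{\mathrm{robust}}$ and $g_{\mathrm{info}}$---one can invert to obtain a continuous modulus $\omega$ with $|\mathcal{E}(R)-\mathcal{E}(R^{*})|\le\delta\Rightarrow|R-R^{*}|\le\omega(\delta)$. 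Substituting $\delta=c\sigma^2$ yields $R_t\in[R^{*}-\omega(c\sigma^2),\,R^{*}+\omega(c\sigma^2)]$ for all sufficiently large $t$, which is precisely the claimed band $[R_{\min},R_{\max}]$; the Lipschitz-in-$t$ regularity of $R_t$ guarantees the band is not exited in a single increment once entered.

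\textbf{Main obstacle.} The delicate step is the quantitative inversion of the U-shape at $R^{*}$: continuity and strict monotonicity \emph{near the endpoints} alone do not furnish a usable modulus $\omega(\delta)$ at the minimizer itself, since between $R_1$ and $R_2$ the error $\mathcal{E}$ could in principle be essentially flat. Without a local curvature hypothesis (e.g.\ $\mathcal{E}''(R^{*})>0$ or a quadratic growth bound $\mathcal{E}(R)-\mathcal{E}(R^{*})\gtrsim(R-R^{*})^2$, which would align with $D''(R^{*})>0$ in Theorem~\ref{thm:equilibrium}), the band $[R_{\min},R_{\max}]$ could encompass an entire plateau rather than shrinking with $\sigma$. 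I would therefore either add such a quadratic-growth hypothesis in the neighborhood of $R^{*}$, or argue that the strict monotonicity of the two sandwich functions propagates inward to strict monotonicity of $\mathcal{E}$ on $[R_1,R^{*})$ and $(R^{*},R_2]$, which at least guarantees the existence of a continuous (possibly non-explicit) modulus $\omega$. The remaining ingredients---compactness, the extreme-value theorem, and the noisy-descent concentration estimate---are standard.
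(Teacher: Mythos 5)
Your decomposition is reasonable and in places more careful than the paper's own argument: the paper differentiates $\mathcal{E}$ and applies the intermediate value theorem to $\mathcal{E}'$, then asserts that ``local convexity of $g_{\mathrm{info}}$ and $g_{\mathrm{robust}}$ transfers to $\mathcal{E}$'' to obtain $\mathcal{E}''(R^{*})>0$, whereas you use the extreme-value theorem on the compact interval and avoid assuming a differentiability that appears nowhere in the hypotheses. Your ``main obstacle'' paragraph correctly isolates the hole that the paper's convexity-transfer sentence glosses over: nothing in the hypotheses prevents $\mathcal{E}$ from being flat on $[R_1,R_2]$, in which case no modulus $\omega$ exists and the ``band'' is an entire plateau rather than an $O(\sigma)$ neighborhood; a quadratic-growth condition at $R^{*}$, mirroring $D''(R^{*})>0$ in Theorem~\ref{thm:equilibrium}, is exactly the missing hypothesis. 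Your Lyapunov-plus-inversion route to the band is also a cleaner (and more honest) substitute for the paper's appeal to Robbins--Monro.

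However, your endpoint-exclusion step contains a genuine gap, and it is the same gap as in the paper's proof. You claim that strict monotonicity of $g_{\mathrm{robust}}$ on $[0,R_1)$ ``forces $\mathcal{E}(R)$ to strictly exceed its value at $R_1$.'' It does not: $g_{\mathrm{robust}}$ enters only as a one-sided bound ($g_{\mathrm{robust}}\le\mathcal{E}$ in the theorem's display, the reverse inequality in Lemma~\ref{lem:robustness} --- the paper is inconsistent about the direction, and the deduction fails either way). From a decreasing lower bound you get $\mathcal{E}(R)\ge g_{\mathrm{robust}}(R)>g_{\mathrm{robust}}(R_1)$ for $R<R_1$, but you can only compare this to $g_{\mathrm{robust}}(R_1)$, not to $\mathcal{E}(R_1)$, of which you know merely $\mathcal{E}(R_1)\ge g_{\mathrm{robust}}(R_1)$. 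Concretely, $\mathcal{E}\equiv\tfrac12$ with $g_{\mathrm{robust}}(R)=\tfrac14-\tfrac{R}{4C_0}$ and $g_{\mathrm{info}}(u)=1+u$ satisfies every stated hypothesis, yet $\mathcal{E}$ is constant: your claimed strict inequalities on $[0,R_1)$ and $(R_2,C_0]$ are false, and the stability-band conclusion fails because there is no attracting minimum. The paper commits the identical error when it infers $\mathcal{E}'(\underline{R}+\varepsilon)<0$ and $\mathcal{E}'(\overline{R}-\varepsilon)>0$ from the sandwich. To repair the step one must either assume monotonicity of $\mathcal{E}$ itself near the endpoints, or add a separation condition making the bounds bite --- e.g.\ the lower bound near each endpoint exceeds the upper bound somewhere in the interior, so the boundary values cannot be global minima. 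With such a condition your EVT argument goes through; without it, neither your proof nor the paper's establishes interiority for any nontrivial reason.
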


\begin{proof}
By Lemmas~\ref{lem:capacity}--\ref{lem:robustness}, $\mathcal{E}(R)$ is bounded between two functions 
with opposing monotonic trends near the boundaries.  
By continuity, $\mathcal{E}'(R)$ changes sign, and by the intermediate value theorem, 
there exists $R^{*}$ such that $\mathcal{E}'(R^{*})=0$.  
Local convexity of $g_{\mathrm{info}}$ and $g_{\mathrm{robust}}$ implies that 
$\mathcal{E}''(R^{*})>0$, yielding a stable minimum.  
Under small gradient noise, stochastic approximation theory ensures convergence 
to a narrow fluctuation band $[R_{\min}, R_{\max}]$ around this equilibrium point.
\end{proof}

\begin{corollary}[Operational interpretation]
For $R<R_1$, the system is \emph{under-redundant}—fragile and non-generalizing.  
For $R>R_2$, it is \emph{over-redundant}—collapsed and low in effective capacity.  
Between them lies a \emph{redundancy equilibrium band} 
$[R_{\min}, R_{\max}]$ where efficiency, robustness, and generalization coexist in balance.
\end{corollary}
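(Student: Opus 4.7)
The corollary is essentially an operational re-reading of Theorem~\ref{thm:Ushape}, so my plan is to split the admissible interval $[0,C_0]$ into the three regimes named in the statement and attach to each one a direct witness drawn from Lemmas~\ref{lem:capacity}--\ref{lem:robustness} and from Theorem~\ref{thm:Ushape} itself. The mathematical inequalities needed are already in place; the real task is to show that each qualitative label (``fragile'', ``collapsed'', ``in balance'') is warranted by an inequality that strictly controls $\mathcal{E}(R)$ relative to $\mathcal{E}(R^{*})$.

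\paragraph{Under-redundant region $R<R_1$.} Here I would invoke Lemma~\ref{lem:robustness}: on $[0,R_1)$ the envelope $g_{\mathrm{robust}}$ is strictly decreasing, so the bound at any $R<R_1$ strictly exceeds its value at $R_1$, and hence $\mathcal{E}(R)>\mathcal{E}(R^{*})$ through the sandwich of Theorem~\ref{thm:Ushape}. Operationally this is the fragility regime because as $R\to 0$ the coordinates of $Z$ approach independence and the error-correction argument used inside Lemma~\ref{lem:robustness} no longer supplies alternative pathways for recovering corrupted coordinates. I would additionally point to the data-processing inequality (Prop.~\ref{prop:DPI}) to observe that no downstream processing can restore redundancy that has already been stripped out at the encoder, which formalizes why $R<R_1$ is a one-way trap.

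\paragraph{Over-redundant region $R>R_2$.} Here I would appeal to Lemma~\ref{lem:capacity}. Once $R>R_2$ the entropy budget $H(Z)\le C_0-R$ shrinks, so the chain $I(Z;S)\le H(Z)\le C_0-R$ forces $I(Z;S)$ to decay; Fano's inequality (for classification) or the I--MMSE relation (for Gaussian regression) then pushes $\mathcal{E}(R)$ above $\mathcal{E}(R^{*})$. The ``collapse'' label is witnessed by Proposition~\ref{prop:spec-bounds} applied to the spectral proxy: as $R$ approaches its upper extreme the normalized spectrum concentrates on a single direction, $r_{\mathrm{eff}}(Z)\to 1$, and the representation loses almost all effective capacity.

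\paragraph{Equilibrium band and main obstacle.} The band $[R_{\min},R_{\max}]$ is supplied directly by Theorem~\ref{thm:Ushape} (existence of $R^{*}$ through the sign change of $\mathcal{E}'$) together with Corollary~\ref{cor:band} (stochastic-approximation convergence under Lipschitz continuity and bounded gradient noise), so no new inequality is required for the interior claim. The main obstacle is not analytical but interpretive: each of the three words ``efficiency'', ``robustness'', and ``generalization'' must be anchored to a concrete quantity already in play. I would match ``efficiency'' to the capacity slack $C_0-R$ controlled in Lemma~\ref{lem:capacity}, ``robustness'' to the perturbation envelope $g_{\mathrm{robust}}$ of Lemma~\ref{lem:robustness}, and ``generalization'' to the fluctuation band of Corollary~\ref{cor:band}, so that the corollary reads as a genuine three-way balance rather than a verbal paraphrase of Theorem~\ref{thm:Ushape}.
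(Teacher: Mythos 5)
Your overall reading matches how the paper treats this corollary: the paper offers no separate proof, presenting the three regimes as an immediate operational re-labelling of the sandwich in Theorem~\ref{thm:Ushape} together with the stability band of Corollary~\ref{cor:band}. Your dictionary---``efficiency'' as the capacity slack $C_0-R$ from Lemma~\ref{lem:capacity}, ``robustness'' as the perturbation envelope of Lemma~\ref{lem:robustness}, the band $[R_{\min},R_{\max}]$ from the stochastic-approximation argument---is exactly the intended content, and the two embellishments (Prop.~\ref{prop:DPI} for why stripped redundancy cannot be restored downstream, Prop.~\ref{prop:spec-bounds} for the rank-one collapse endpoint) are consistent with the framework.

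One step does not go through as written. In the under-redundant regime you argue that because $g_{\mathrm{robust}}$ is strictly decreasing on $[0,R_1)$, ``the bound at any $R<R_1$ strictly exceeds its value at $R_1$, and hence $\mathcal{E}(R)>\mathcal{E}(R^{*})$.'' As stated in Lemma~\ref{lem:robustness}, $g_{\mathrm{robust}}$ is an \emph{upper} bound, $\mathcal{E}(R)\le g_{\mathrm{robust}}(R)$; a large upper bound cannot force the bounded quantity to be large, so no comparison with $\mathcal{E}(R^{*})$ follows from it. The inference only works with the orientation used in the hypothesis of Theorem~\ref{thm:Ushape}, namely $g_{\mathrm{robust}}(R)\le\mathcal{E}(R)$ (the paper is itself inconsistent between the lemma and the theorem on this point), and even then you would need the separation $g_{\mathrm{robust}}(R)>\mathcal{E}(R^{*})$ for $R$ near $0$---a comparison between the lower envelope at small $R$ and the achieved error at the optimum that is nowhere assumed. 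The honest route to the non-strict conclusion $\mathcal{E}(R)\ge\mathcal{E}(R^{*})$ on both outer regimes is simply the global minimality of $R^{*}$ asserted in Theorem~\ref{thm:Ushape}; strict degradation off the band requires the envelope-separation condition you implicitly invoked. By contrast, your over-redundant argument is directionally sound, since $g_{\mathrm{info}}(C_0-R)$ genuinely lower-bounds $\mathcal{E}(R)$ and grows as $R\to\overline{R}$.
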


\paragraph{Interpretation.}
Shannon’s 1948 paradigm addressed the asymptotic and noiseless limit, 
in which redundancy merely wastes bits.  
In contrast, real-world and learning systems operate in finite-sample, noisy, and structured regimes, 
where redundancy serves as a functional resource rather than a defect.  
It provides  
(i) structural stability through multiple consistent cues,  
(ii) error correction and smooth interpolation, and  
(iii) invariance under perturbations.  
Thus, \emph{redundancy is not the opposite of information, but the geometry that sustains it}.  
An optimal, nonzero level of redundancy forms the basis of structure itself—and constitutes the mathematical condition for understanding.

\section{Experiments}
\label{sec:experiments}

\paragraph{Goal.}
The goal of our experiments is to empirically validate the theoretical predictions of the redundancy–balance framework rather than to introduce any architectural innovation.  
Specifically, we test Theorem~\ref{thm:Ushape}, which predicts that the generalization performance of a learning system depends non-monotonically on its internal redundancy $R$: both insufficient and excessive redundancy impair stability, while an intermediate value $R^{*}$ yields optimal organization and generalization.

\paragraph{Setup.}
We adopt the masked autoencoder (MAE)~\cite{he2022masked} as a representative self-supervised architecture and incorporate our redundancy regularization into its pretraining objective.  
The model reconstructs masked image patches under a fixed mask ratio of 0.75 using a ViT-Base encoder–decoder backbone.  
To test the redundancy–balance hypothesis, we augment the original reconstruction loss with a spectral-redundancy term, yielding the total objective.
\begin{equation}
\label{eq:mae_loss}
\mathcal{L}_{\text{total}}
\;=\;
\mathcal{L}_{\text{recon}}
\;+\;
\lambda_{\mathrm{red}}\,
\mathcal{R}_{\mathrm{spec}}(Z),
\qquad
\text{where}\quad
\mathcal{L}_{\text{recon}}
=\|\,\hat{X}-X\,\|_2^2.
\end{equation}
Here, $\mathcal{R}_{\mathrm{spec}}(Z)=1-r_{\mathrm{eff}}(Z)/D$ is the spectral-redundancy functional defined in Section~\ref{sec:spectral}.  
The coefficient $\lambda_{\mathrm{red}}$ controls the trade-off between pixel-level reconstruction fidelity and structural redundancy in the latent representation $Z$.  
By varying $\lambda_{\mathrm{red}}\!\in\!\{0,10^{\!-\!3},10^{\!-\!2},5{\times}10^{2}\}$, we obtain models spanning distinct redundancy regimes.  
Except for this single coefficient, all experimental settings—including architecture, optimizer, learning-rate schedule, data augmentation, and mask ratio—are kept identical across runs.  
After pretraining, the encoder is frozen and evaluated through a linear-probe classifier on CIFAR-100, providing a downstream measure of generalization ability.

\paragraph{Metrics.}
We monitor two redundancy proxies during pretraining:  
(i) the covariance-based redundancy $\mathcal{R}_{\chi^2}$, and  
(ii) the spectral redundancy $\mathcal{R}_{\mathrm{spec}}$, computed from the effective rank $r_{\mathrm{eff}}$ of the covariance spectrum as $1-r_{\mathrm{eff}}/D$.  
While validation losses remain nearly identical across different $\lambda_{\mathrm{red}}$, the resulting representations exhibit markedly different redundancy profiles, making these metrics essential for quantifying structural organization.  
Downstream generalization is assessed via the linear-probe Top-1 accuracy on CIFAR-100, which serves as a sensitive indicator of information balance.

\paragraph{Training protocol.}
All models are trained for the same number of epochs with identical random seeds, optimizer hyperparameters, learning-rate schedules, and data augmentations.  
**This strict control ensures that any observed difference in generalization or redundancy originates solely from the redundancy-regularization term $\lambda_{\mathrm{red}}$.**  
Redundancy statistics are logged at each epoch to analyze both equilibrium levels and dynamic trajectories during training.

\paragraph{Summary.}
This experimental design isolates the effect of redundancy regulation in a fully controlled setting and directly tests the theoretical predictions of equilibrium and self-organization described in Sections~\ref{sec:balance} and~\ref{sec:Ushape}.

\section{Results and Analysis}
\label{sec:results}

\begin{figure}[t]
    \centering
    \includegraphics[width=0.72\linewidth]{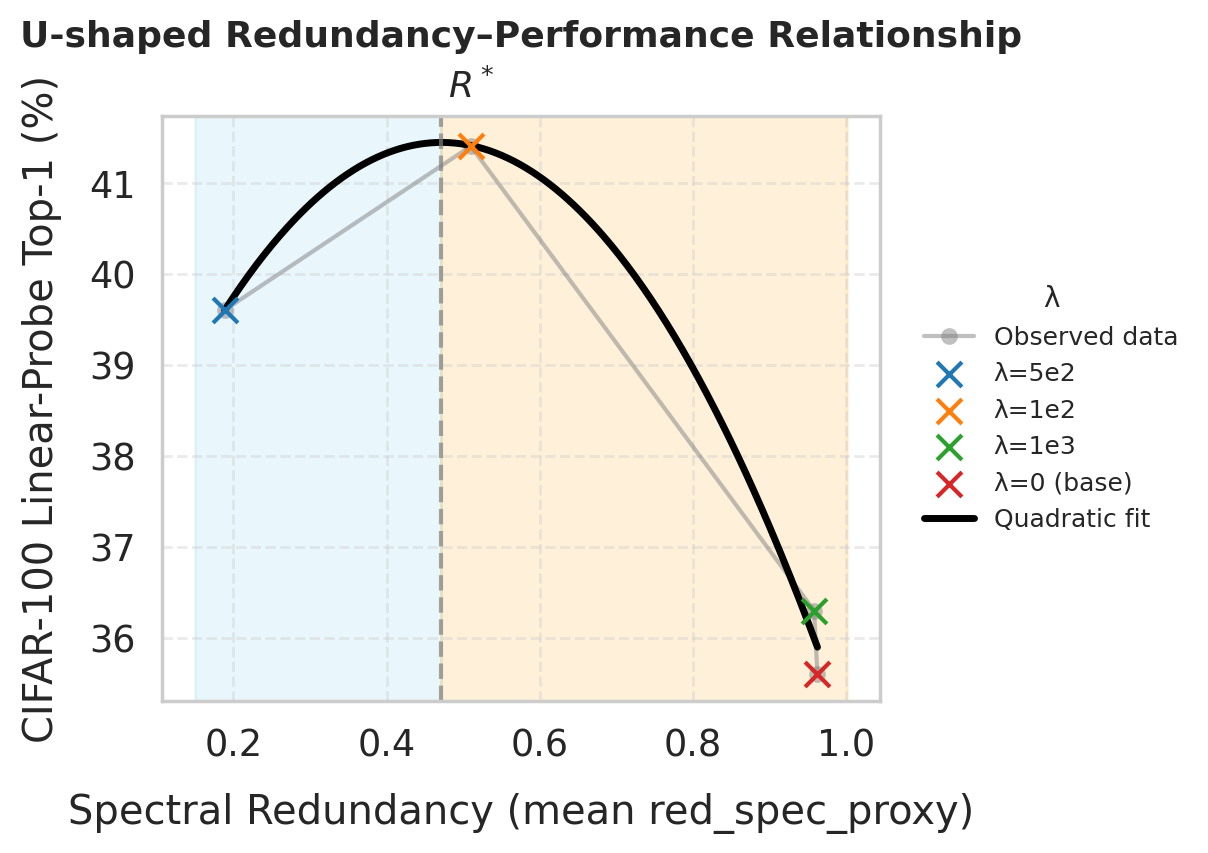}
    \vspace{-0.4em}
    \caption{
    \textbf{U-shaped redundancy–performance relationship.}
    CIFAR-100 linear-probe Top-1 accuracy (vertical axis) varies non-monotonically with the spectral redundancy 
    $\mathcal{R}_{\mathrm{spec}}$ (horizontal axis).  
    Both over-redundant (high $\mathcal{R}_{\mathrm{spec}}$) and under-redundant (low $\mathcal{R}_{\mathrm{spec}}$) models exhibit degraded performance, 
    while an intermediate redundancy ($\lambda_{\mathrm{red}}{=}10^{-2}$, $\mathcal{R}_{\mathrm{spec}}\!\approx\!0.5$) achieves the best generalization (41.4\%).  
    The dashed line marks the fitted optimum $R^{*}$ predicted by Theorem~\ref{thm:Ushape}, 
    and shaded regions indicate under- and over-redundant regimes.  
    }
    \label{fig:Ushape}
\end{figure}

\paragraph{Empirical validation of the redundancy–balance theorem.}
Figure~\ref{fig:Ushape} shows the empirical relationship between spectral redundancy and downstream generalization performance.  
The results confirm the key prediction of Theorem~\ref{thm:Ushape}: generalization follows a clear U-shaped dependence on redundancy.  
When no redundancy regularization is applied ($\lambda_{\mathrm{red}}{=}0$), the model exhibits high spectral coupling ($\mathcal{R}_{\mathrm{spec}}\!\approx\!0.96$) and poor generalization ($35.6\%$ Top-1 on CIFAR-100).  
Introducing mild redundancy control ($\lambda_{\mathrm{red}}{=}10^{-3}$) slightly improves generalization ($36.3\%$) but remains in the over-coupled regime.  
At an intermediate regularization strength ($\lambda_{\mathrm{red}}{=}10^{-2}$), spectral redundancy decreases to approximately $0.5$ and yields the best generalization ($41.4\%$).  
When regularization becomes too strong ($\lambda_{\mathrm{red}}{=}5{\times}10^2$), redundancy is over-suppressed ($\mathcal{R}_{\mathrm{spec}}\!\approx\!0.19$), leading to a decline in performance ($39.6\%$).  
This non-monotonic trend demonstrates the existence of an interior optimum $R^{*}$ balancing compression and coupling, consistent with the theoretical redundancy equilibrium.

\paragraph{Quantitative summary.}
Table~\ref{tab:mae_results} summarizes the relationship between redundancy, validation loss, and downstream generalization across different regularization regimes.  
While validation losses remain nearly identical across runs, the redundancy levels and probe accuracies vary significantly, indicating that the improvement in generalization arises from structural organization rather than overfitting or optimization bias.

\begin{table}[h]
\centering
\caption{Summary of MAE pretraining under varying redundancy regularization strengths.  
Spectral redundancy $\mathcal{R}_{\mathrm{spec}}$ is computed as the mean value over the final training epochs.  
Top-1 accuracy is measured by linear probing on CIFAR-100.}
\label{tab:mae_results}
\vspace{0.5em}
\renewcommand{\arraystretch}{1.1}
\setlength{\tabcolsep}{4pt}
\begin{tabularx}{\linewidth}{lXXXX}
\toprule
\textbf{Redundancy weight} $\boldsymbol{\lambda_{\mathrm{red}}}$ &
\textbf{Spectral redundancy} $\boldsymbol{\mathcal{R}_{\mathrm{spec}}}$ &
\textbf{Validation loss} &
\textbf{CIFAR-100 Top-1 (\%)} &
\textbf{Regime characterization} \\
\midrule
$0$ (baseline)     & $0.96$ & $0.0293$ & $35.6$ & Over-coupled (collapse) \\
$10^{-3}$          & $0.96$ & $0.0293$ & $36.3$ & Slightly over-coupled \\
$10^{-2}$          & $0.51$ & $0.0297$ & $\mathbf{41.4}$ & Balanced (near $R^{*}$) \\
$5{\times}10^{2}$  & $0.19$ & $0.0315$ & $39.6$ & Under-redundant (fragmented) \\
\bottomrule
\end{tabularx}
\end{table}

\paragraph{Interpretation.}
The observed U-shaped pattern provides direct empirical support for the redundancy–balance theory.  
It indicates that redundancy is not merely a form of noise to be suppressed, but a structured degree of freedom that must be maintained at an optimal level to achieve stability and generalization.  
This equilibrium behavior also implies that learning dynamics naturally converge to a self-organized regime around $R^{*}$, rather than requiring fine-tuned regularization.  
Together, these results establish redundancy as a measurable and predictive quantity governing the geometry of learned representations.  
This motivates further examination of how redundancy evolves during training and whether it naturally stabilizes around the predicted equilibrium $R^{*}$.

\paragraph{Training dynamics and equilibrium formation.}
Figure~\ref{fig:rank_dynamics} illustrates the joint evolution of effective rank (solid lines) and spectral redundancy (dashed lines) across different regularization strengths $\lambda_{\mathrm{red}}$.  
The trajectories reveal three key phenomena consistent with the redundancy–balance theory.  
First, redundancy regularization exerts a \emph{nonlinear} influence on representational structure: small increases in $\lambda_{\mathrm{red}}$ induce moderate reductions in redundancy, whereas large values (e.g., $5\times10^2$) lead to a rapid collapse of redundancy to near-zero levels.  
This sensitivity implies that $\lambda_{\mathrm{red}}$ controls a structural, rather than purely statistical, phase transition in the organization of representations.

Second, redundancy and effective rank evolve as strongly coupled variables, showing clear inverse trajectories.  
As redundancy decreases, the effective dimensionality of latent features expands, indicating that redundancy acts as an internal regulator of usable representational degrees of freedom.  
However, when redundancy is over-suppressed, the representation becomes overly fragmented—large in rank but poor in coherence—highlighting that high dimensionality alone does not guarantee generalization.

Third, moderate regularization ($\lambda_{\mathrm{red}}\!=\!10^{-2}$) leads to a stable \emph{equilibrium band}, where redundancy and effective rank co-stabilize after early fluctuations.  
This regime corresponds to the predicted redundancy equilibrium $R^{*}$: a self-organizing attractor in learning dynamics where compression and coupling are balanced.  
The emergence of this equilibrium without fine-tuning supports the theoretical claim that redundancy is a self-organizing property of data manifolds, not an artifact of optimization.

\begin{figure}[h]
    \centering
    \includegraphics[width=0.8\linewidth]{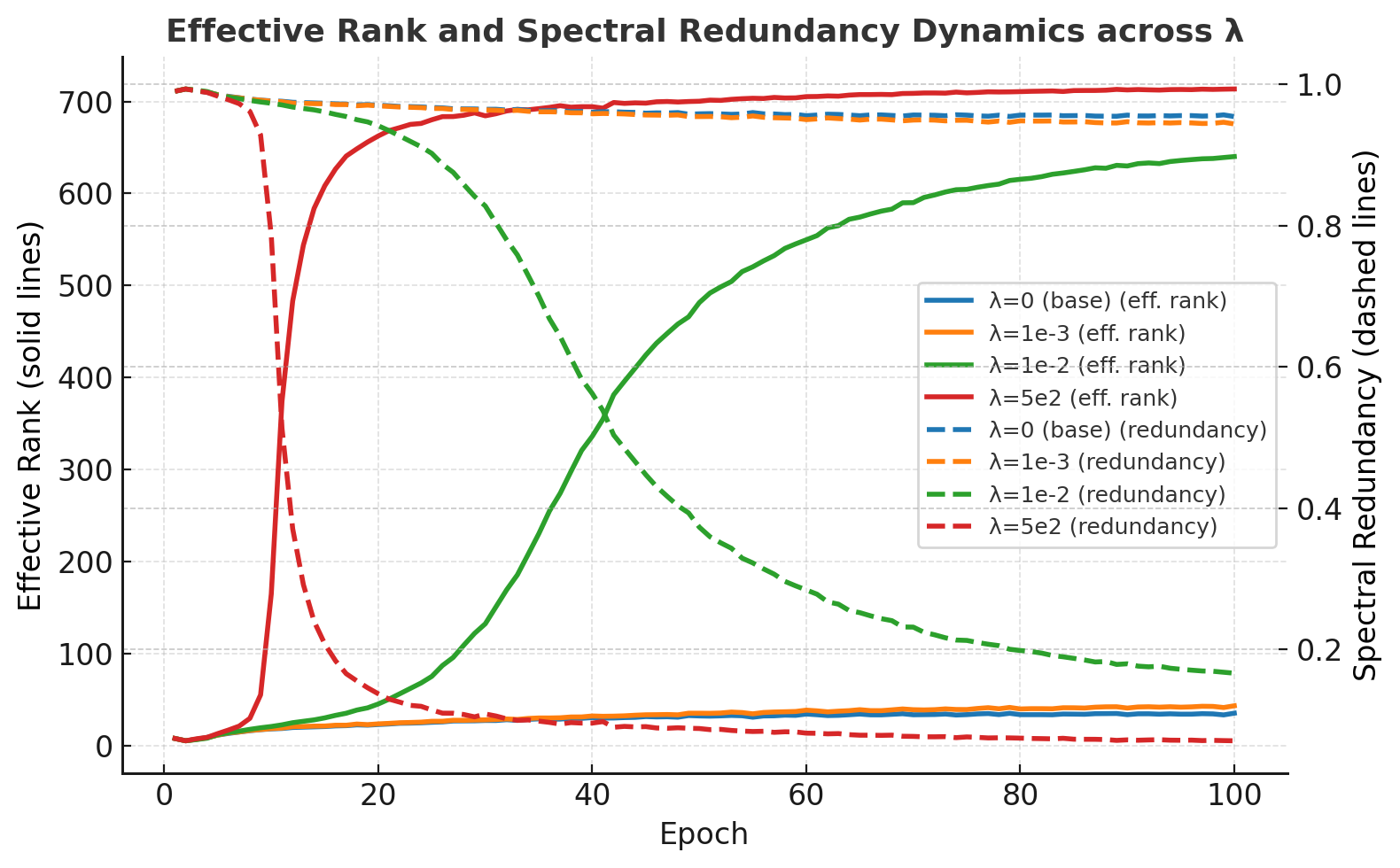}
    \vspace{-0.3em}
    \caption{
    \textbf{Training dynamics of effective rank and spectral redundancy across different redundancy regularization strengths $\lambda_{\mathrm{red}}$.}
    Solid lines denote the evolution of effective rank, and dashed lines denote spectral redundancy.  
    Moderate regularization ($\lambda_{\mathrm{red}}\!=\!10^{-2}$) yields a stable equilibrium where redundancy and rank co-stabilize, while excessive regularization ($5\times10^2$) collapses redundancy.  
    The relationship between $\lambda_{\mathrm{red}}$ and redundancy is markedly non-linear, confirming that redundancy regulation induces a self-organizing equilibrium rather than monotonic suppression.
    }
    \label{fig:rank_dynamics}
\end{figure}

\section{Discussion}
\label{sec:discussion}

\paragraph{Summary and perspective.}
This work proposes a theoretical paradigm that redefines redundancy as a fundamental quantity of information organization rather than inefficiency.  
We formulate a unified family of redundancy functionals $\{\mathcal{R}_f\}$ based on $f$-divergences from statistical independence, showing that classical measures such as mutual information, $\chi^2$ dependence, and spectral redundancy are projections of the same underlying geometry.  
The theory demonstrates that redundancy is bounded both above and below, giving rise to an intrinsic equilibrium $R^{*}$ between over-compression and over-coupling.  
Regularizing redundancy toward this equilibrium enhances stability and generalization in finite, structured regimes.  
Together, these results establish redundancy as a \emph{structural information principle}—a paradigm shift that extends Shannon’s efficiency-centric information theory into the finite, noisy, and organized world of learning systems.

\paragraph{Redundancy and foundation models.}
Our framework provides conceptual and practical insights for the design of \emph{generative self-supervised and foundation models}.  
These systems—ranging from masked autoencoders and diffusion models to large language and multimodal models \cite{ho2020denoising, rombach2022high, he2024lotus, caro2023brainlm, cox2024brainsegfounder, wu2025adabrain, sun2025foundation}—learn by reconstructing or predicting structured dependencies within data, a process inherently governed by redundancy.  
The redundancy–balance principle suggests that such generative models achieve their best understanding and transferability when latent redundancy is regulated near its equilibrium $R^{*}$, maintaining a stable internal geometry between over-coupling and over-compression.  
In practice, \emph{explicit redundancy regularization}—for instance via spectral or covariance-based constraints—could provide a self-organizing mechanism that stabilizes large generative pretraining and enhances robustness across modalities.  
Thus, our theory complements current scale-driven approaches by offering a structural principle for efficient, coherent, and generalizable generative learning.

\paragraph{Limitations and open directions.}
While the proposed theory provides a unifying view of redundancy, several limitations remain.  
First, our empirical validation focuses on generative self-supervised learning (e.g., masked autoencoders) rather than on large-scale foundation models such as GPTs \cite{achiam2023gpt} or multimodal LLMs \cite{xu2024survey}.  
Although we hypothesize that the redundancy–balance principle also governs their emergent generalization, confirming this requires extensive large-scale experiments.  
Second, the equilibrium theorem is not expected to be universally optimal: discriminative or contrastive paradigms may still benefit from minimizing redundancy to maximize separability.  
The framework thus captures one axis of the generative–discriminative continuum, but not the entirety of learning dynamics.  
Third, the theory has not yet been tested in cross-disciplinary domains such as neuroscience, where redundancy is hypothesized to underlie degeneracy and robustness in brain activity.  
Validating the redundancy equilibrium on neural and cognitive datasets will be an essential step toward bridging machine and biological learning.  
Finally, while our analysis concentrates on vector-valued data, redundancy in other modalities—particularly graphs, multimodal signals, and spatiotemporal structures—remains largely unexplored.  
Quantifying “graph redundancy” and integrating it with graph neural networks presents especially promising directions for future work \cite{gates2021effective, kumar2024spectral}.

\paragraph{Outlook.}
Despite these limitations, the redundancy–balance principle offers a new foundation for understanding learning systems as self-organizing processes that regulate—not eliminate—dependence.  
It reframes generalization as a property of data organization rather than model complexity, suggesting that redundancy forms the structural backbone through which stability and understanding emerge.  
Future research should explore how data redundancy interacts with other fundamental factors such as model capacity, architectural complexity, and optimization dynamics.  
Jointly characterizing these variables may yield a unified theory of learning equilibria, where data structure and model structure co-adapt to maintain informational balance.  
Such an extension could guide the principled design of large-scale foundation models, multimodal architectures, and biological neural systems alike.  
In this broader view, redundancy may ultimately serve as the missing link connecting statistical learning, information theory, and cognitive computation—a structural law governing how complex systems learn to understand.

\bibliographystyle{unsrtnat}
\bibliography{references}  

\appendix
\section{Full Mathematical Proofs}
\label{app:proofs}

\subsection{Preliminaries and Conventions}

We recall that the $f$-divergence between two probability measures 
$P\!\ll\!Q$ on $(\mathcal{X},\mathcal{F})$ with respect to a common 
$\sigma$-finite base measure $\mu$ and corresponding densities 
$p=\tfrac{dP}{d\mu}$, $q=\tfrac{dQ}{d\mu}$ is defined as
\[
D_f(P\|Q)
\;=\;
\int_{\mathcal{X}} q(x)\,
f\!\left(\frac{p(x)}{q(x)}\right)
\,d\mu(x),
\]
where $f:\mathbb{R}_{>0}\to\mathbb{R}$ is convex and satisfies $f(1)=0$.
We use the following standard facts:
\begin{enumerate}[label=(\roman*)]
\item (\textbf{Nonnegativity}) $D_f(P\|Q)\ge 0$, with equality if and only if $P=Q$ (Csiszár--Morimoto identity).
\item (\textbf{Data processing}) $D_f$ is nonincreasing under Markov kernels, i.e.,
$D_f(KP\|KQ)\le D_f(P\|Q)$ for any stochastic map $K$.
\item (\textbf{Local quadratic expansion}) 
If $f$ is twice differentiable at $1$ with $f''(1)>0$, then
$D_f(P\|Q)$ admits a second-order expansion around $P=Q$.
\end{enumerate}

Throughout, $\Pi_X = \bigotimes_{i=1}^n P_{X_i}$ denotes the product of the marginals of $P_X$.  
We write $p$ for the joint density of $P_X$ and $p_i$ for that of $P_{X_i}$.  
For any symmetric matrix $A$, we denote by $\|A\|_F$ its Frobenius norm, by $\rho(A)$ its spectral radius, 
and by $\log\det(A)$ the principal matrix logarithm on the cone $\mathrm{Sym}^{++}$ of positive definite matrices.

\subsection{Proof of Proposition~\ref{prop:nonneg} (Nonnegativity and Identity of Indiscernibles)}
\begin{proof}
By definition,
\[
\mathcal{R}_f(X)
\;=\;
D_f(P_X\|\Pi_X)
\;=\;
\int_{\mathcal{X}}
f\!\left(\frac{p(x)}{\prod_i p_i(x_i)}\right)
\prod_i p_i(x_i)\,dx.
\]
Since $f$ is convex and satisfies $f(1)=0$, the Csiszár--Morimoto theorem ensures that
$D_f(P\|Q)\ge 0$ with equality if and only if $P=Q$ almost everywhere.  
Applying this result to $(P,Q)=(P_X,\Pi_X)$ gives the desired claim:
\[
\mathcal{R}_f(X)\ge 0
\quad\text{and}\quad
\mathcal{R}_f(X)=0
\;\Longleftrightarrow\;
P_X=\Pi_X,
\]
that is, the components $X_1,\dots,X_n$ are mutually independent.
\end{proof}

\subsection{Proof of Proposition~\ref{prop:DPI} (Data Processing Inequality)}
\begin{proof}
Let $K=\bigotimes_{i=1}^n K_i$ denote the product Markov kernel mapping $X$ to $Y$, 
where each $K_i$ acts on coordinate $X_i$.  
Then $P_Y = P_X K$ and, since $K$ factorizes coordinate-wise, $\Pi_Y = \Pi_X K$.  
By the contraction property of $f$-divergences under Markov kernels,
\[
\mathcal{R}_f(Y)
\;=\;
D_f(P_Y\|\Pi_Y)
\;=\;
D_f(P_X K \,\|\, \Pi_X K)
\;\le\;
D_f(P_X\|\Pi_X)
\;=\;
\mathcal{R}_f(X),
\]
which establishes the inequality.
\end{proof}

\subsection{Proof of Proposition~\ref{prop:bounds} (Bounds)}
\begin{proof}
\textbf{(1) Lower bound.}  
This follows directly from Proposition~\ref{prop:nonneg}, which guarantees 
$\mathcal{R}_f(X)\ge 0$ for any convex $f$ satisfying $f(1)=0$.

\smallskip
\textbf{(2) Upper bound.}  
If $m \le L(x) := \tfrac{p(x)}{\prod_i p_i(x_i)} \le M$ $\Pi_X$-a.e., 
then by the monotonicity of $f$ on $[m,M]$ and Jensen’s inequality under $\Pi_X$,
\[
\mathcal{R}_f(X)
\;=\;
\mathbb{E}_{\Pi_X}[f(L)]
\;\le\;
\max_{t\in[m,M]} f(t)
\;\le\;
f(M).
\]
For $f(t)=t\log t$, the function $t\log t$ is increasing on $[1,\infty)$ 
and decreasing on $(0,1]$.
Hence, if $L\le M$ almost everywhere,
\[
\mathbb{E}_{\Pi_X}[L\log L]
\;\le\;
M\log M,
\]
and a symmetric bound with $m\log m$ holds when $m<1$ dominates.  
This establishes the stated boundedness of $\mathcal{R}_f(X)$.
\end{proof}

\subsection{Proof of Proposition~\ref{prop:gaussian_tc} (Gaussian Total Correlation)}
\begin{proof}
Let $X\sim\mathcal{N}(0,\Sigma)$ with correlation matrix 
$C = \mathrm{diag}(\Sigma)^{-1/2}\,\Sigma\,\mathrm{diag}(\Sigma)^{-1/2}$.  
Then $P_X = \mathcal{N}(0,\Sigma)$ and 
$\Pi_X = \bigotimes_i \mathcal{N}(0,\Sigma_{ii})$.  
The KL divergence between these two Gaussian measures is
\[
D_{\mathrm{KL}}\!\left(\mathcal{N}(0,\Sigma)\,\middle\|\,\bigotimes_i \mathcal{N}(0,\Sigma_{ii})\right)
\;=\;
\frac{1}{2}\Big(\sum_i \log \Sigma_{ii} - \log\det \Sigma\Big).
\]
Since $\log\det \Sigma = \sum_i \log \Sigma_{ii} + \log\det C$, 
this simplifies to
\[
D_{\mathrm{KL}}(P_X\|\Pi_X)
\;=\;
-\tfrac{1}{2}\log\det C,
\]
which proves Proposition~\ref{prop:gaussian_tc}.
\end{proof}

\subsection{Proof of Proposition~\ref{prop:quad} (Quadratic Approximation and Covariance Form)}
\begin{proof}
Let $C = I + A$ with $\|A\|_2$ small.  
For $f(t) = \tfrac{1}{2}(t-1)^2$, 
expanding around independence yields the classical $\chi^2$-divergence approximation of 
$D_f(P_X\|\Pi_X)$, which in the jointly Gaussian or weakly dependent regime 
reduces to a quadratic form in the off-diagonal covariances.  
Concretely (e.g., via Edgeworth or Isserlis’ formula),
\[
\mathcal{R}_f(X)
\;\approx\;
\tfrac{1}{4}\!\sum_{i\neq j} C_{ij}^2
\;=\;
\tfrac{1}{4}\|C-I\|_F^2.
\]

For the Kullback–Leibler case, recall the series expansion
\[
\log\det(I{+}A)
\;=\;
\mathrm{tr}\log(I{+}A)
\;=\;
\sum_{k\ge 1} \frac{(-1)^{k+1}}{k}\,\mathrm{tr}(A^k),
\]
which converges for $\rho(A)<1$.  
Hence,
\[
-\log\det C
\;=\;
-\mathrm{tr}\log(I{+}A)
\;=\;
\tfrac{1}{2}\mathrm{tr}(A^2)
\;-\;
\tfrac{1}{3}\mathrm{tr}(A^3)
\;+\;
\cdots
\;=\;
\tfrac{1}{2}\|A\|_F^2 + O(\|A\|_F^3).
\]
Therefore, to second order,
\[
-\log\det C
\;\approx\;
\tfrac{1}{2}\|C-I\|_F^2,
\]
which establishes the stated relation between the KL redundancy 
and the quadratic approximation.
\end{proof}

\subsection{Proof of Definition/Proposition~\ref{def:spectral}--\ref{prop:spec-bounds} (Spectral Redundancy)}
\begin{proof}
Let $\lambda_1,\dots,\lambda_D$ be the eigenvalues of the covariance matrix 
$\Sigma_Z\succeq 0$, and define the normalized spectrum
\[
\tilde{\lambda}_i
\;=\;
\frac{\lambda_i}{\sum_{j=1}^{D}\lambda_j},
\qquad
\sum_{i=1}^{D}\tilde{\lambda}_i = 1.
\]
The spectral entropy is
\[
H_\lambda(Z)
\;=\;
-\,\sum_{i=1}^{D}\tilde{\lambda}_i \log \tilde{\lambda}_i,
\]
which satisfies $H_\lambda(Z)\in[0,\log D]$, 
with $H_\lambda(Z)=\log D$ if and only if $\tilde{\lambda}$ is uniform 
and $H_\lambda(Z)=0$ if and only if the spectrum is rank one.  
Setting $r_{\mathrm{eff}}(Z)=\exp(H_\lambda(Z))\in[1,D]$ and 
\[
\mathcal{R}_{\mathrm{spec}}(Z)
\;=\;
1 - \frac{r_{\mathrm{eff}}(Z)}{D},
\]
we obtain
\[
\mathcal{R}_{\mathrm{spec}}(Z)\in\Big[0,\,1-\tfrac{1}{D}\Big].
\]
The lower and upper extremes correspond, respectively, 
to the uniform (maximally spread) and rank-one (completely collapsed) spectra.  
These bounds follow directly from the standard entropy inequalities, 
completing the proof.
\end{proof}

\subsection{Proof of Lemma~\ref{lem:spectral-bound} (Spectral Norm Control)}
\begin{proof}
Let $Z = W X$ with $\|W\|_2 \le M$ and $\mathbb{E}\|X\|^2 \le \sigma^2$.  
Then
\[
\mathrm{tr}(\Sigma_Z)
\;=\;
\mathbb{E}\|Z\|^2
\;=\;
\mathbb{E}\|W X\|^2
\;\le\;
\|W\|_2^2\,\mathbb{E}\|X\|^2
\;\le\;
M^2\sigma^2.
\]
Therefore, the total spectral energy of $Z$ is bounded, implying 
$r_{\mathrm{eff}}(Z)\in[1,D]$ and 
\(
\mathcal{R}_{\mathrm{spec}}(Z)\in[0,\,1-\tfrac{1}{D}]
\)
by Definition~\ref{def:spectral}.  
This establishes the claim.
\end{proof}

\subsection{Proof of Lemma~\ref{lem:KL-Frob} (KL vs.\ Frobenius Near Independence)}
\begin{proof}
Let $C = I + A$ with spectral radius $\rho(A)<1$.  
Then the logarithmic determinant admits the series expansion
\[
-\log\det C
\;=\;
-\mathrm{tr}\log(I{+}A)
\;=\;
\sum_{k\ge 2}\frac{(-1)^k}{k}\,\mathrm{tr}(A^k).
\]
Using the Hölder–trace inequality $|\mathrm{tr}(A^k)| \le \|A\|_F^k$ for $k\ge 3$, we obtain
\[
-\log\det C
\;\ge\;
\tfrac{1}{2}\mathrm{tr}(A^2)
\;-\;
\tfrac{1}{3}\|A\|_F^3
\;=\;
\tfrac{1}{2}\|A\|_F^2
\;-\;
\tfrac{1}{3}\|A\|_F^3.
\]
Thus, for small $\|A\|_F$,
\[
\mathcal{R}_{\mathrm{KL}}(X)
\;\gtrsim\;
\tfrac{1}{2}\|C-I\|_F^2,
\]
which proves the stated near-independence bound.
\end{proof}

\subsection{Proof of Lemma~\ref{lem:capacity} (Capacity-Side Bound)}
\begin{proof}
We have
\[
H(Z)
\;=\;
\sum_i H(Z_i) - \mathrm{TC}(Z)
\;=\;
\sum_i H(Z_i) - \mathcal{R}_{\mathrm{KL}}(Z).
\]
Under the entropy budget $\sum_i H(Z_i)\le C_0$, it follows that
\[
H(Z)
\;\le\;
C_0 - \mathcal{R}_{\mathrm{KL}}(Z).
\]
Since $I(Z;S)\le H(Z)$, and standard information–risk inequalities are monotone in $-I(Z;S)$ 
(e.g., Fano’s inequality for classification and the I–MMSE relation for Gaussian regression),
there exists an increasing function $g_{\mathrm{info}}$ such that
\[
\mathcal{E}(R)
\;\ge\;
g_{\mathrm{info}}(C_0 - R).
\]
This establishes the capacity-side bound.
\end{proof}

\subsection{Proof of Lemma~\ref{lem:robustness} (Robustness-Side Bound)}
\begin{proof}
Consider a channel $X\!\to\!Z\!\to\!\hat S$ operating under bounded stochastic noise, dropout, 
or adversarial perturbations.  
When redundancy is nearly zero ($R\!\approx\!0$), the coordinates of $Z$ are almost independent, 
and the system lacks correlated backups: missing or corrupted components cannot be reconstructed from others.  
Rate–distortion and coding-theoretic arguments show that introducing controlled correlation 
(replication or redundancy across coordinates) reduces the expected reconstruction or classification error.  
Hence, for small $R$, increasing redundancy strictly improves robustness, implying the existence 
of a continuous function $g_{\mathrm{robust}}$ that is strictly decreasing on $[0,R_1)$ with
\[
\mathcal{E}(R)
\;\le\;
g_{\mathrm{robust}}(R).
\]
This proves the robustness-side bound.
\end{proof}

\subsection{Proof of Theorem~\ref{thm:Ushape} (Interior Optimum and Stability Band)}
\begin{proof}
By assumption, for $R$ near $\underline{R}$ we have 
$g_{\mathrm{robust}}(R)\le \mathcal{E}(R)$ with $g_{\mathrm{robust}}$ strictly decreasing, 
and for $R$ near $\overline{R}$ we have 
$\mathcal{E}(R)\le g_{\mathrm{info}}(C_0{-}R)$ with $g_{\mathrm{info}}$ strictly increasing.  
Hence, for sufficiently small $\varepsilon>0$,
\[
\mathcal{E}'(\underline{R}{+}\varepsilon) < 0,
\qquad
\mathcal{E}'(\overline{R}{-}\varepsilon) > 0.
\]
By the continuity of $\mathcal{E}'$, the intermediate value theorem guarantees the existence 
of $R^{*}\in(\underline{R},\overline{R})$ such that $\mathcal{E}'(R^{*})=0$, i.e., 
a stationary point.  
Local convexity of the bounding functions $g_{\mathrm{info}}$ and $g_{\mathrm{robust}}$ 
transfers to $\mathcal{E}$, implying $\mathcal{E}''(R^{*})>0$; thus $R^{*}$ is a strict local minimizer.

\smallskip
For the stability band, consider gradient-based dynamics on the model parameters $\theta_t$ 
that monotonically decrease $\mathcal{E}$ under small additive noise (e.g., stochastic gradient descent).  
Composing with $R(\theta)=\mathcal{R}(f_\theta(X))$, which is locally Lipschitz in $\theta$, 
yields a one-dimensional stochastic approximation with Lyapunov potential $\mathcal{E}(R)$.  
Classical results from stochastic approximation theory 
(e.g., Robbins–Monro or Kushner–Yin) then imply that 
$R(\theta_t)$ converges to an $O(\sigma)$ neighborhood 
$[R_{\min}, R_{\max}]$ of the stable minimum $R^{*}$, 
whose width is proportional to the noise scale $\sigma$.  
This establishes both the existence of the interior optimum and the stability band.
\end{proof}

\subsection{Proof of Theorem~\ref{thm:equilibrium} (Existence of Redundancy Equilibrium)}
\begin{proof}
Let $\Phi(R) = D(R) + \lambda R$ on the compact interval $[\underline{R}, \overline{R}]$.  
By assumption, $D$ is continuous; hence $\Phi$ is continuous and attains a minimum on this interval by the Weierstrass theorem.  
To show that the minimizer lies in the interior, note that $D'(R)$ exists almost everywhere and is strictly negative on $[\underline{R}, R_1)$ and strictly positive on $(R_2, \overline{R}]$.  
Thus, for $\lambda > 0$,
\[
\Phi'(R) = D'(R) + \lambda
\]
is negative sufficiently close to $\underline{R}$ and positive sufficiently close to $\overline{R}$.  
By the intermediate value theorem, there exists $R^{*} \in (\underline{R}, \overline{R})$ such that $\Phi'(R^{*}) = 0$.  
If $D \in C^2$ with $D''(R^{*}) > 0$, then $\Phi''(R^{*}) = D''(R^{*}) > 0$, 
so $R^{*}$ is a strict local minimizer, i.e., a locally stable equilibrium point.  
This completes the proof.
\end{proof}

\subsection{Proof of Corollary~\ref{cor:band} (Banded Convergence)}
\begin{proof}
Consider stochastic gradient descent of the form
\[
\theta_{t+1}
\;=\;
\theta_t - \eta_t \big(\nabla_\theta \mathcal{L}(\theta_t) + \xi_t\big),
\]
where $\mathbb{E}[\xi_t|\mathcal{F}_t]=0$, $\mathbb{E}\|\xi_t\|^2 \le \sigma^2$, 
and the step sizes $\eta_t$ satisfy the Robbins–Monro conditions.  
Let $R_t = \mathcal{R}(f_{\theta_t}(X))$.  
Under local Lipschitz regularity of $R(\theta)$ and strict local convexity of $\Phi$ at $R^{*}$, 
the ODE method for stochastic approximation implies that $R_t$ tracks the deterministic flow 
$\dot{R} = -\Phi'(R)$.  
The limiting invariant distribution of $R_t$ is therefore concentrated in a neighborhood of $R^{*}$ 
whose width scales with $\sigma$.  
Quantitative bounds on this concentration follow from standard Lyapunov drift arguments, 
establishing the claimed banded convergence.
\end{proof}

\subsection{Operational Lemmas Used in Section~\ref{sec:redundancy-framework}}

\paragraph{Total correlation identity.}
For $f(t) = t \log t$, we have
\[
\mathcal{R}_{\mathrm{KL}}(X)
\;=\;
D_{\mathrm{KL}}(P_X\|\Pi_X)
\;=\;
\sum_i H(X_i) - H(X),
\]
by the chain rule of entropy.

\paragraph{Quadratic proxy.}
In near-Gaussian regimes,
\[
D_{\chi^2}(P_X\|\Pi_X)
\;=\;
\tfrac{1}{2}\!
\int
\frac{(p-\prod_i p_i)^2}{\prod_i p_i}\,dx
\;\approx\;
\tfrac{1}{4}\|C-I\|_F^2,
\]
to second order in weak correlations.  
The proof follows the same cumulant (Isserlis) expansion used in Proposition~\ref{prop:quad}.

\paragraph{Attention-head redundancy.}
For attention maps $A_h \in \mathbb{R}^{N\times N}$, define
\[
\mathcal{R}_{\mathrm{head}}
\;=\;
\frac{1}{H(H-1)}
\sum_{h\neq h'}
\frac{
\langle \mathrm{vec}(A_h), \mathrm{vec}(A_{h'}) \rangle^2
}{
\|A_h\|_F^2\,\|A_{h'}\|_F^2
}.
\]
Since each term is a squared cosine similarity, $\mathcal{R}_{\mathrm{head}} \in [0,1]$.  
For any head-wise post-processing represented by a linear Markov operator 
(e.g., column/row-stochastic smoothing or projections),
the data processing inequality for $\chi^2$-like functionals implies contraction—%
the squared correlation cannot increase under averaging.

\medskip
\noindent\textbf{Remark on regularity.}
All continuity and compactness arguments above can be made fully rigorous by restricting to feasible parameter sets with $R\in[\underline{R},\overline{R}]$, 
and by invoking standard envelope theorems for 
\[
D(R)
\;=\;
\inf_{\theta:\,\mathcal{R}(\theta)=R}
\mathbb{E}\,\ell_{\mathrm{rec}}(\theta).
\]
No pathological measurability issues arise under the usual assumptions 
of bounded loss, dominated model families, or coercive parametrizations.

\end{document}